\newlist{hlist}{itemize}{1}
\setlist[hlist,1]{
    label=\textbullet,
    labelsep=1em,  
    leftmargin=0pt,
    itemindent=1.5em,
    listparindent=0pt,
    itemsep=1em,
    align=left
}
\renewcommand\subsubsection{\@startsection{subsubsection}{3}{\z@}%
                                     {1.5ex plus .1ex minus .2ex}%
                                     {0.5ex plus .1ex}%
                                     {\itshape\small\mdseries}}
\newcommand{\domainbed}{\textsc{DomainBed}\xspace}
\definecolor{lightgreen}{HTML}{e6ffe6} 
\definecolor{edgegreen}{HTML}{00cc00} 
\definecolor{contextencoderbrown}{HTML}{ffcc66} 
\definecolor{edgeencoderbrown}{HTML}{ffaa00} 
\definecolor{dcblue}{HTML}{99d6ff}
\definecolor{edgedcblue}{HTML}{008ae6}
\definecolor{classifiergrey}{HTML}{cccccc}
\definecolor{edgeclassifiergrey}{HTML}{666666}
\definecolor{efmyellow}{HTML}{cccc00}
\definecolor{fmyellow}{HTML}{ffff00}
\definecolor{efdpink}{HTML}{ff6666}
\definecolor{fdpink}{HTML}{ffb3b3}
\definecolor{econditionalblue}{HTML}{999999}
\definecolor{conditionalblue}{HTML}{ffffe6}
\definecolor{eadporange}{HTML}{99dfff}
\definecolor{adporange}{HTML}{e6f7ff}
\definecolor{erevpurple}{HTML}{ffd11a}
\definecolor{revpurple}{HTML}{ffe066}
\definecolor{grey}{HTML}{bfbfbf}
\definecolor{egrey}{HTML}{f2f2f2}
\tikzset{myarrow/.style={-{Triangle[length=2mm,width=2mm]}, line width=2pt, draw=lightblue, fill=lightblue}}
\definecolor{lightblue}{HTML}{66a3ff}
\newcommand*\bigcdot{\mathpalette\bigcdot@{.5}}
\newcommand*\bigcdot@[2]{\mathbin{\vcenter{\hbox{\scalebox{#2}{$\m@th#1\bullet$}}}}}
\newtheorem{proof}{\textbf{Proof}}
\newtheorem{proofsketch}{\textbf{Proof sketch}}
\newtheorem{property}{\textbf{Property}}
\newtheorem{theorem}{\textbf{Theorem}}
\newtheorem{lemma}{\textbf{Lemma}}
\newtheorem{definition}{\textbf{Definition}}
\newtheorem{assumption}{\textbf{Assumption}}
\begin{document}

\title{C$^3$DG: Conditional Domain Generalization for Hyperspectral Imagery
Classification with Convergence and Constrained-risk Theories }

\author{Zhe Gao, Bin Pan, Zhenwei Shi
\thanks{This work was supported by the National Key R\&D Program of China
under the Grant 2022ZD0160401 and 2022YFA1003800, the National Natural Science Foundation
of China under the Grant 62001251, 62001252 and 62125102, and the
Beijing-Tianjin-Hebei Basic Research Cooperation Project under the Grant
F2021203109. (Corresponding author: Bin Pan)}
\thanks{Zhe Gao and Bin Pan are with the School of Statistics and Data Science,
KLMDASR, LEBPS, and LPMC, Nankai University, Tianjin 300071, China (email: \href{mailto:gzben01@gmail.com}{gzben01@gmail.com}; \href{mailto:panbin@nankai.edu.cn}{panbin@nankai.edu.cn}).}
\thanks{Zhenwei Shi is with Image Processing Center, School of Astronautics, Beihang University, Beijing 100191, China (e-mail: \href{mailto:shizhenwei@buaa.edu.cn}{shizhenwei@buaa.edu.cn}).}
}

\markboth{IEEE Transactions on Cybernetics}%
{Shell \MakeLowercase{\textit{et al.}}: A Sample Article Using IEEEtran.cls for IEEE Journals}

\IEEEpubid{}

\maketitle

\begin{abstract}
Hyperspectral imagery (HSI) classification contends with hyperspectral-monospectra, where different classes reflect similar spectra. 
Mainstream leveraging spatial features but inflate accuracy for inevitably including test pixels in training patches. Domain generalization shows potential but approaches still fail to distinguish similar spectra across varying domains and overlook the theoretical support. Therefore, we focus solely on spectral information and propose a  Convergence and Constrained-error Conditional Domain Generalization method for Hyperspectral Imagery Classification (C$^3$DG) that can produce different outputs for similar inputs but varying domains.
Inspired by test time training, we introduce the Conditional Revising Inference Block (CRIB) and provide theories with proofs for model convergence and generalization errors. 
CRIB employs a shared encoder and multi-branch decoders to fully leverage the conditional distribution during training, achieving a decoupling that aligns with the generation mechanisms of HSI. To ensure model convergence and maintain highly controllable error, we focus on two theorems. First, in the convergence corollary, we ensure optimization convergence by demonstrating that the gradients of the loss terms are not contradictory. Second, in the risk upper bound theorem, our theoretical analysis explores the relationship between test-time training and recent related work to establish a concrete bound for error. Experimental results on three benchmark datasets confirm the superiority of our proposed C$^3$DG.
%

\end{abstract}

\begin{IEEEkeywords}
Hyperspectral classification, conditional domain generalization, hyperspectral-monospectra. 
\end{IEEEkeywords}

\section{Introduction}

\IEEEPARstart{H}{yperspectral} imagery (HSI) contains rich spectral information across a wide range of wavelengths, enabling precise material classification. Consequently, HSI classification is crucial for various applications, including agricultural monitoring, environmental observation, and mineralogy \cite{yongtu1,seaice_renpeng,yongtu2,gaolianru_buguniao}. 
However, a significant challenge for HSI classification is the phenomenon of \textit{hyperspectral-monospectra} \cite{tongpuyiwu, tongpuyiwu2, tongwuyipu3,former3}, that same spectrum but reflected from different classes in hyperspectral datasets. This phenomenon severely impacts the accuracy of HSI classification and the applicability of standard models.

Recent hyperspectral classification methods incorporate spatial features to address the phenomenon of hyperspectral-monospectra.
Commonly employed algorithms include those based on autoencoders for encoder-decoder networks \cite{gaolianru_sae,DFE_AE1,yeyuanxin_sae} that enhance feature extraction effectiveness, filter methods \cite{tcyb_ensb1,tcyb_ensb2} that extract spatial features through various kernels and ensemble learning, unsupervised HSI classification methods \cite{zhongyanfei_unsup, unsup_tcyb, pengjiangtao_contrastive,fangleyuan_unsupervised,xumeng_jiasen_unsup} that capitalize on the plethora of unlabeled samples in HSI to improve classification accuracy, semi-supervised approaches \cite{semi1_tcyb, semi2_tcyb, semi3_tcyb} that simultaneously utilize a small amount of labeled data and a large volume of unlabeled data during the training stage, and band selection or reduction approaches \cite{lihengchao_bandreduction,reduc_tcyb,wangqi_reduction, zhongyanfei_bandselect,8602462} that focus on filtering out bands that are truly useful for classification.
From a structural perspective, various convolutional neural network architectures tailored for HSI classification tasks \cite{cnn1,xumengjisen_lwcnn} effectively utilize both spatial and spectral information. Graph-based approaches \cite{pengjiangtao_graph,wangqi_graph,lihengchao_graph,9626567} process spectral information through a global-local perspective, efficiently exploring the correlation between both adjacent and non-adjacent pixels. Recent transformer-based methods \cite{former1, 9440852, yeyuanxin_former} segment HSI into patches according to spatial and spectral criteria, similarly leveraging spatial and spectral information simultaneously.

As highlighted by recent studies \cite{rit18}, the extraction of spatial features can lead to falsely inflating accuracy for test samples participating in training. Moreover, hyperspectral-monospectra persists even when spatial information is taken into account \cite{former3}. For example, cement roads and rooftops from different elevations share the same spatial-spectral features. In addition, spatial-spectral algorithms fail when data is limited to spectral-only information \cite{eurocrops}. Therefore, the research on approaches that solely rely on spectral data attracts scholarly focus. 
Domain generalization \cite{DGreview} is a promising solution for addressing the hyperspectral-monospectra phenomenon. These approaches \cite{dubois2021optimal,chevalley2022invariant,mtl,kim2021selfreg} effectively extract domain features during feature extraction, allowing classifiers to differentiate between similar samples that have different domain characteristics, leading to divergent classification outcomes. Specifically, test time training \cite{new_arm,TTT} domain generalization methods generate distinct outputs for magnitude-similar samples from different domains through adjustments made at test time via a context network.
Recent research on using domain generalization methods in HSI classification primarily focuses on fixed-output domain generalization approaches, which can be categorized into two main streams. Most recent methods \cite{LDGnet, single-DG, s2ecnet} utilize single-source domain generalization that leverages a domain-generative data augmentation to extract domain-invariant features. Multi-source-domain methods \cite{mdgnet,lica} manually select domains and minimize cross-domain differences while maximizing intra-class variability to improve generalizability. However, these approaches still require spatial information to address the issue of hyperspectral-monospectra. However, there are still two main obstacles:

\begin{itemize}[leftmargin=*]

\item  As previously emphasized, these methods produce fixed outputs and are not specifically designed to address the hyperspectral-monospectra issue.

\item  Most of these methods focus on technical details and neglect theoretical support, leading to a lack of convergence analysis and constraints on generalization errors that may lead to conflicts among components within the method and subsequently affect the optimization.

\end{itemize}

Therefore, we propose a \textbf{C}onvergence and error-\textbf{C}onstrained \textbf{C}onditional \textbf{D}omain \textbf{G}eneralization (C$^3$DG) for HSI classification that can output divergent results for similar samples but with different domain features to address the hyperspectral-monospectra issue. Inspired by test time training, we introduce a \textbf{C}onditional \textbf{R}evising \textbf{I}nference \textbf{B}lock (CRIB) and provide proofs for model training convergence and the upper bound of generalization errors. CRIB utilizes a common decoupled encoder and a multi-branch domain information-aware decoder to achieve identifiable disentanglement and fully utilize the conditional distribution of the inputs (i.e., label information) during training. In the convergence corollary section, we ensure optimization convergence by proving that the inner product between the gradients of the loss terms is consistently positive. In the upper bound theorem section, Our theoretical analysis identifies the correlation between test-time training methods and recent sharpness-aware minimization theories to construct a constrained risk bound for the theorem.

\begin{itemize}[leftmargin=*]
    \item We propose a conditional domain generalization HSI classification strategy that addresses the hyperspectral-monospectra issue by outputting adapted results at the inference stage.
    \item We design a multi-branch conditional revising inference block to comprehensively utilize the conditional distributions of the input samples.
    \item We provide theoretical analysis for the proposed method. One corollary focuses on the convergence of the model, and the other theorem constrains the upper bound of the classification risk.
\end{itemize}




\section{Methodology}

\subsection{Backgrounds}
Most models encounter significant accuracy drops when applied to unseen target domains after being trained on source domains. Consequently, domain generalization, which aims to enhance a model’s generalization capability from source domains to other unseen domains, has attracted academic attention. A brief definition can be summarized as follows.

\noindent\textbf{Domain Generalization}\quad Given $M$ source domains with labels$~\{X_i\}_{i=1}^{M}$,$~X_i=\{x_j,y_j\}_{j=1}^{N_i}$. $N_i$ is the number of samples in domain $X_i$. $x_j$, $y_j$ are the data and labels. The aim is to obtain a generalizable model for the unseen target domains with$~\{X_i\}_{i=1}^{M}$, formally:
\begin{equation}
    \min \mathbb{E}_{D \in \mathcal{D}} \left[ \mathbb{E}_{x \in D}\left[ \mathcal{R}(f(x),y) \right] \right]
\end{equation}
where $\mathcal{D}$ is the total domain space and $\mathcal{R}$ is risk function. 

Test time training methods of domain generalization aim to learn models that adapt at testing to domain shift using unlabeled test data, which can be formally defined as follows.
\noindent\textbf{Test Time Training} aims to learn a function shift from the input batch $\mathcal{B}$ and current model parameters $\theta$ to the C$^3$DG network parameter $\theta^{'}$, i.e. $f_{shift}: \mathcal{B},\theta \rightarrow \theta^{'}$, that satisfy the following optimization problem.

\begin{equation}
    \min_{\theta, \phi} \hat{\mathcal{E}}(\theta, \phi) = \mathbb{E}_{p_z} \left[ \mathbb{E}_{p_{xy|z}} \left[ \frac{1}{K} \sum_{k=1}^{K} \mathcal{R} (f(x_k; \theta'), y_k) \right] \right]
\end{equation}
where $\theta^{'} = f_{shift}(\theta, x_1, \ldots, x_K; \phi)$ and $\phi$ the parameters of $f_{shift}$.

An indirectly but workable implementation approach \cite{new_arm} for $f_{shift}$ is as follows:
\begin{equation}
    f(x,\theta^{'})\simeq f(\text{concat}(x,x_{context})),
\end{equation}
where $x_{context} = f_{cont}(x)$ is the context feature and $f_{cont}$ the context network.
These settings regard the combining of the features with test data as the shift in the model parameters.

\subsection{The Overall Framework}
In this section, we formally introduce the C$^3$DG framework. Our method can be integrated with any hyperspectral classification approach as a baseline by changing the Backbone Classifier $F$ in the flowchart Fig. \ref{framework}. For any classifier $F$, our method serves as a domain feature extractor through an auxiliary branch of CRIB architecture, thereby being able to provide revised outputs for similar inputs but with different domain information. Specifically, We employ a simple 1D-CNN network, of which the extractor is of three 1dConv-layers and the classifier is a two-linearlayer structure ending with a softmax, as the baseline for convenience.

\begin{algorithm}
\caption{Training stage of C$^3$DG}
\begin{algorithmic}[1] 
\Require $M$ source domains with labels $\{X_i\}_{i=1}^{M}$,$X_i=\{x_j,y_j\}_{j=1}^{N_i}$
\Ensure The parameters $\theta_D, \theta_F, \theta_f$ of C$^3$DG $D$, backbone predictor $F$ and pseudo-classifier $f_{pse}$.

\State \textbf{Initialize:} $\theta_G, \theta_F, \theta_f$
\For{\textit{epoch} = 1 to $T$}
    \State \textbf{Predict pseudo auxiliary labels}
    \State $\hat{y}_{i_{pse}}=f_{pse}(x_i)$
    \State \textbf{Extract Domain Context Features}
    \State $(\hat{z}_{d_i},\hat{z}_{m_i})=\hat{f}(x_i)$
    \State $\hat{z}_{s_i}=\hat{f}_{y_i}(\hat{z}_{m_i})$
    \For{\texttt{$c$ = 1 to $C$}}
        \State $\mathbf{z}_{s_c}=\text{BI}(\underset{i \leq N}{\text{mean}}\{\hat{z}_{s_i}|y_i=c\})$ \scriptsize \# BI: Bilinear Interpolate
        \State \normalsize $\mathbf{x}=\text{concat}(\mathbf{x},\mathbf{z}_{s_c})$
    \EndFor
    \State \textbf{Predict the Labels }
    \State $\hat{\mathbf{y}}=F(\mathbf{x})$
    \State Calculate the loss $L_{recon}$ through Eq. (\ref{realLrecon}).
    \State Calculate the loss $L_{rev}$ through Eq. (\ref{lrev}).
    \State Calculate the loss $L_{cls}$ through Eq. (\ref{lcls}).
    \State Update the model by gradient descent
\EndFor
\end{algorithmic}
\end{algorithm}

As illustrated in Fig. \ref{framework}, spectra from the input batch of different pixels (denoted as $x_1,x_2...x_M$) are firstly fed into the CRIB branch to extract the domain feature of these spectra. Specifically, the context encoder $\hat{f}$ in the CRIB processes this batch of data points $\mathcal{B}$ unclasswisedly to obtain the domain variables $z_d$ and the mixed variables $z_m$. It is important to highlight that a simple pseudo-label $\hat{f}_{pse}$ classifier of three fully connected layers is included since the labels are unknown but necessary during the inference stage. A sample spectrum $x_i$ in the batch is processed based on its rough pseudo-label $y_{pse_i}$; that is, only the forward encoder $\hat{f}_{pse_i}$ of the $y_{pse_i}$-th branch is turned on to process $x_i$ to extract the class-independent variable $z_{s_i}$, which is then sorted into discrete sets $\mathcal{Z}_{s_1}=\{z_{s_i}~|~y_{pse_i} = 1\},...,\mathcal{Z}_{s_C}=\{z_{s_i}~|~y_{pse_i} = C\}$ ($C$ is the total class number). The output sets of all branches $\mathcal{Z}_{s_1},...,\mathcal{Z}_{s_C}$ are taken means set-wisely and then reshaped to match the size of the input spectra point. Sequently, the average reshaped $z_{s_1},...,z_{s_C}$ are concatenated with $x_1,x_2...x_M$ channel-wisely to form the final revision $\{\text{Concat}(x_1,z_{s_1},...,z_{s_C})\}_{i=1}^{M}$ before fedding into the backbone classifier $F$ to obtain the finnal outputs $\{\hat{y}_i\}_{i=1}^M$. Particularly, the real class labels are utilized for deciding the CRIB branches during the training phase.


\begin{figure*}
\centering
\label{framework}
\includegraphics[width=\textwidth]{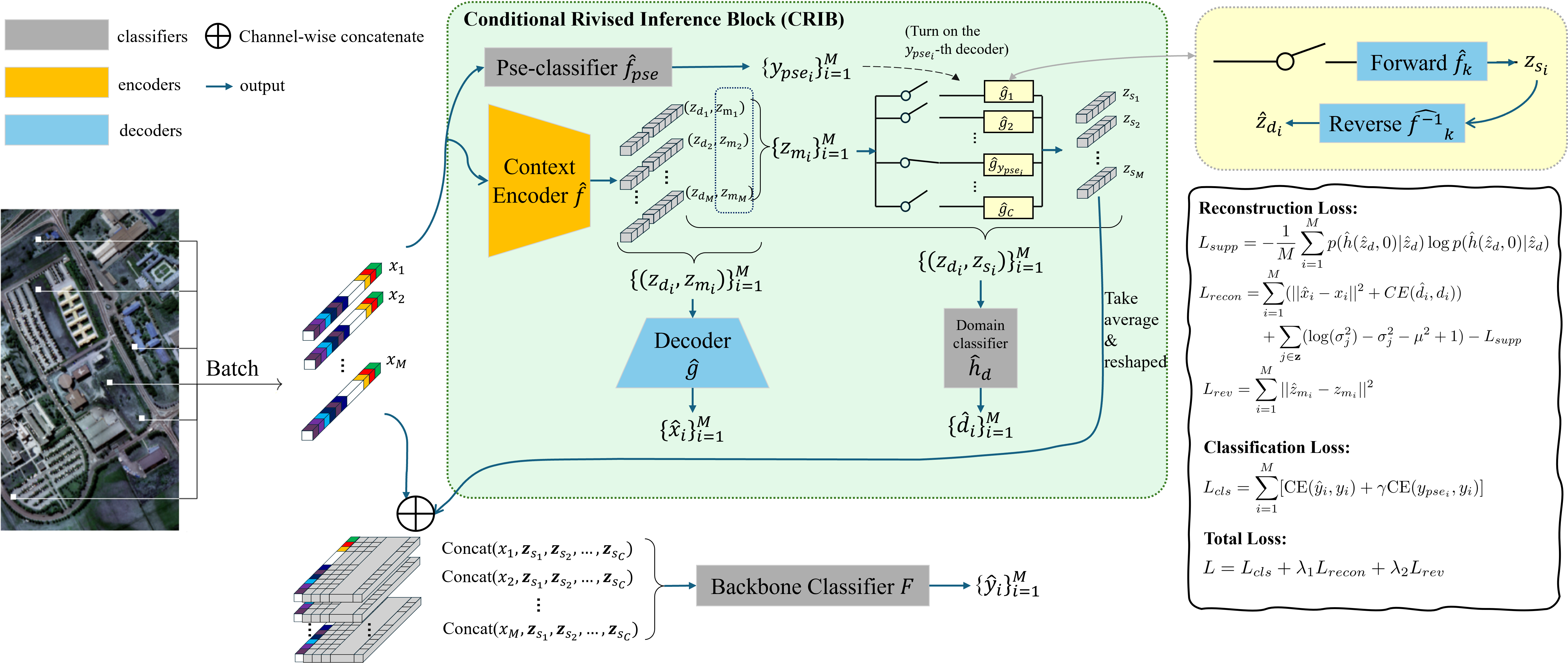}
\caption{The framework of proposed C$^3$DG architecture. The context encoder in the CRIB consists of C branches that handle different conditional distributions, sharing the same encoder. Specifically, C forward networks, named Reverse Dualnet, act as decoders to output the final revision features.}
\label{framework}
\end{figure*}

\subsection{Conditional Revising Inference Block}
\subsubsection{\textbf{Architectures of CRIB}}
A straightforward approach to characterize $\{P(X|c)\}_{c=1}^{C}$ is developing C branches of individual context networks tailored for each different conditional distribution. However, this implementation significantly enlarges the model scale since the complexity of context networks is comparably large and proportional to the usually extensive number of classes C. It is detrimental, not only augmenting the computational demands for training but also leading to a pronounced imbalance between the main classification and auxiliary tasks. This imbalance contradicts an empirical observation \cite{new_arm} that the auxiliary task should be subordinate and not undermine the main task. 

\begin{figure}
\centering
\includegraphics[width=8cm]{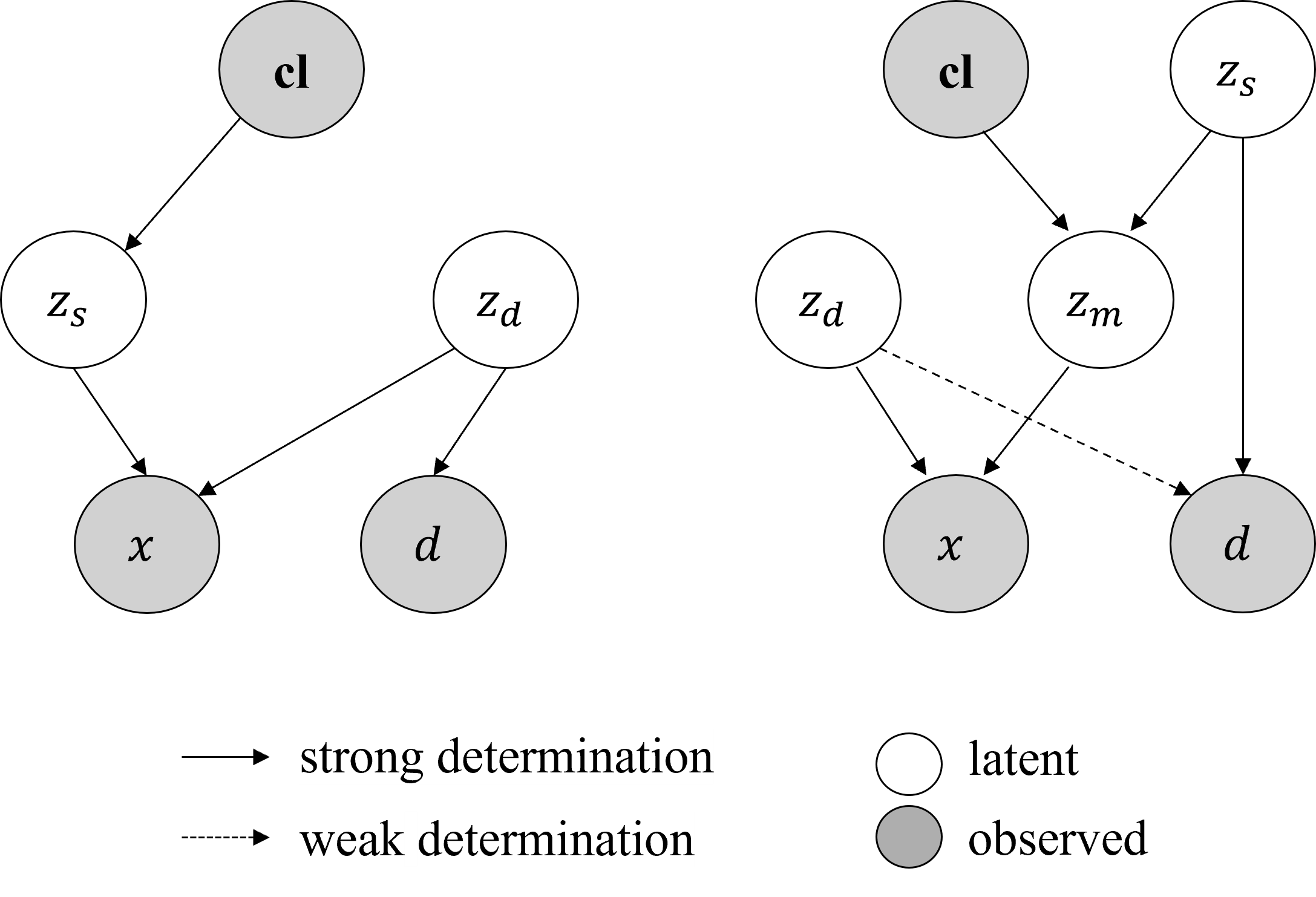}
\caption{Causal map of HSI generating. \textbf{left} is from previous work and \textbf{right} is our finer considerated map.}
\label{generatefig}
\end{figure}

To address the parameter redundancy arising from the multiple VAE branches, our proposed model ensures that most parameters are shared across these branches. Additionally, it utilizes divergent decoders, which allow for the production of distinct context features tailored to each class. This approach effectively balances the need for specificity in the model's outputs with the efficiency of parameter usage.

As depicted in Fig \ref{generatefig}, we formalize the inherent generative mechanism of generating the hyperspectral pixels through a two-stage causal diagram. Firstly, the class-independent variable $z_s$ and the label factor $c$ jointly determine the mixed variable $z_{m}$ which is crucial for label classification. Subsequently, the domain variable $z_{d}$, which could be factors such as sensor models or illumination intensity that cause inevitable hyperspectral-monospectra and are challenging to characterize, generated the real-life spectra merged with the mixed variable $z_{m}$. The comprehensive characterization $z_d, z_s$ of conditional distributions naturally suits the testing revision for the model's test stage. The essence of addressing hyperspectral-monospectra lies in the successful disentanglement of the input $x$.

Therefore, the CRIB, based on domain identification variational autoencoder, effectively extracts the feature $z_d$. As the causality map demonstrates, the ideal context networks for characterizing $\{P(X|c)\}_{c=1}^{C}$ share tremendous portions of their structures. The genuinely requisite architectures to be class-wisely customized is primarily the decoder (denoted as $\hat{f}_c$) that disentangles $z_m$ into $z_s$.

Our CRIB based on VAE structure accomplishes this disentanglement by utilizing domain labels in a classification task. Specifically, as illustrated in Fig. \ref{framework}, pixels are initially processed through the encoder $f_{\mu,\sigma}$ to obtain preliminary $z_d,z_m$. Then the $\{z_m\}$ are processed by different $\hat{f}_c$ depend on their labels. Ultimately, the fully-customized domain information $z_d, z_s$ is employed as the requisite context feature $z$.

\subsubsection{\textbf{Optimization for CRIB}}
\begin{algorithm}
\caption{Test stage of C$^3$DG}
\begin{algorithmic}[1] 
\Require A batch of $\{x_i\}_{i=1}^{M}$
\Ensure Predict labels $\{\hat{y}_i\}_{i=1}^{M}$.
    \State \textbf{Predict pseudo auxiliary labels}
    \State $\hat{y}_{i_{pse}}=f(x_i)$
    \State \textbf{Extract Domain Context Features}
    \State $(\hat{z}_{d_i},\hat{z}_{m_i})=\hat{f}(x_i)$
    \State $\hat{z}_{s_i}=\hat{f}_{\hat{y}_{i_{pse}}}(\hat{z}_{m_i})$
    \For{\texttt{$c$ = 1 to $C$}}
        \State $\mathbf{z}_{s_c}=\underset{i \leq N}{\text{mean}}\{\hat{z}_{s_i}|\hat{y}_{i_{pse}}=c\}$
        \State $\mathbf{x}=\text{concat}(\mathbf{x},\mathbf{z}_{s_c})$
    \EndFor
    \State \textbf{Predict the Labels }
    \State $\hat{\mathbf{y}}=F(\mathbf{x})$
\end{algorithmic}
\end{algorithm}
For the training of the public encoder and various decoders, a reconstruction ELBO loss is required both theoretically \cite{bvae} and intuitively justified. We adhere to the reconstruction loss $L_{recon}$ as part of the auxiliary task. This reconstruction loss ensures the extracting ability of the CRIB while mitigating the risk of overfitting. It is notable that $\hat{f}_u$ is supposed to be reversed in our settings, therefore a reversibility guarantee loss $L_{rev}$ is proposed with a pair of networks $\hat{f}_u, \hat{f}_u^{-1}$. We illustrate the shared decoder as $g$, the entire classification model as $F$, and the number of training points as $N$. $\beta$ is a hyperparameter. The following are the loss functions.
\begin{equation}
\begin{split}
\label{lrecon}
L_{recon} =& \sum_{i=1}^{N}(||g(z_{d_i},z_{m_i})-x||^2 +CE(\hat{h}_d(z_{d_i},z_{s_i}),d_i)) \\
&+\beta\text{KL}((z_m,z_s),N(0,1))
\end{split}
\end{equation}
\begin{equation}
\label{lrev}
L_{rev} = \sum_{i=1}^{N}||\hat{f}_{y_i}(\hat{f}_{y_i}^{-1}(z_{m_i}))-z_{m_i}||^2
\end{equation}
where KL is the KL divergence. 

As discussed in \cite{Wang_Xu_Ni_Zhang_2020}, we reinforce the disantanglement by mutual information suppression that minimizes $H(\hat{d}|z_d)$. The definition is as follows,
\begin{equation}
\label{hdd}
H(\hat{d}|z_d)=-\underset{\hat{z}_d,\hat{d}}{\sum}p(\hat{z}_d,\hat{d})\log \frac{p(\hat{z}_d,\hat{d})}{p(\hat{z}_d)}
\end{equation}
However, for the challenge of analyzing $p(\hat{z}_d,\hat{d})$, we adopt the following approximation to simulate $H(\hat{d}|z_d)$ and detailed theoretical support for the approximation is in the next section.
\begin{equation}
\label{Hhat}
L_{supp} = \hat{H}(\hat{d}|\hat{z}_d) := -\frac{1}{N}\sum_{i=1}^{N}p(\hat{h}(\hat{z}_d,0)|\hat{z}_d) \log p(\hat{h}(\hat{z}_d,0)|\hat{z}_d)
\end{equation}

By utilizing the closed solution of the KL divergence, the reconstruction loss $L_{recon}$ can be simplified as follows,
\begin{equation}
\label{realLrecon}
\begin{split}
L_{recon} =& \sum_{i=1}^{N}(||g(z_{d_i},z_{m_i})-x||^2 +CE(\hat{h}_d(z_{d_i},z_{s_i}),d_i)) \\
&+\underset{j \in \mathbf{z}}{\sum} (\log (\sigma_j^2)-\sigma_j^2-\mu^2+1) - L_{supp}\\
\end{split}
\end{equation}

The classification loss is as follows, where $CE$ denotes the cross-entropy loss and $\gamma$ is a hyperparameter.
\begin{equation}
\label{lcls}
L_{cls} = \sum_{i=1}^{N}[\text{CE}(F(x_i),y_i)+\gamma\text{CE}(f_{pse}(x_i),y_i)]
\end{equation}
From this, we derive the method's loss function:
\begin{equation}
\label{total_loss}
L = L_{cls}+\lambda_1L_{recon}+\lambda_2L_{rev}
\end{equation}
where $\lambda_1, \lambda_2$ are the hyperparameters.

\subsection{Theoretical Analysis for Convergence}
In this subsection, we first provide a specific definition of the hyperspectral-monospectra phenomenon from the perspectives of domain generalization and distribution. Then we present a corollary that ensures the convergence of the optimization. 

\begin{definition}
    \textbf{(hyperspectral-monospectra)} In hyperspectral images, pixels $x_1=x_2$ from different domains $D_1,D_2$ but belong to different $y_1,y_2$ labels. Such an issue is known as the hyperspectral-monospectra phenomenon.
\end{definition}

We will first provide an unbiased estimate of the aforementioned entropy loss term H to enable computability in practice.
\begin{property} 
The empirical expectation approximate
\begin{equation}
    -\frac{1}{N}\sum_{i=1}^{N}\mathbb{E}_{z_m \sim N(0,I)}p(h(\hat{z}_{d_i},z_m)|\hat{z}_i) \log p(h(\hat{z}_{d_i},z_m)|\hat{z}_i) 
\end{equation}
is unbiased estimation for $H(\hat{d}|z_d)$,
\end{property}

\begin{proofsketch}
\begin{equation}
    \begin{split}
        H(\hat{d}|\hat{z}_d) =&  -\frac{1}{N}\underset{\hat{z}_d,\hat{d}}{\sum}p(\hat{z}_d,\hat{d})\log \frac{p(\hat{z}_d,\hat{d})}{p(\hat{z}_d)}\\
        =& - \frac{1}{N}\underset{\hat{z}_d,z_m}{\sum}p(\hat{z}_d,\hat{h}(\hat{z}_d,z_m))\log \frac{p(\hat{z}_d,\hat{h}(\hat{z}_d,z_m))}{p(\hat{z}_d)}\\
        =&-\frac{1}{N}\sum_{i=1}^{N}\mathbb{E}_{z_m \sim N(0,I)}p(h(\hat{z}_{d_i},z_m)|\hat{z}_i)\\
        &~~~~~~~~~\cdot \log p(h(\hat{z}_{d_i},z_m)|\hat{z}_i)
    \end{split}
\end{equation}
\end{proofsketch}

We sample $z_m=0$ for computational convenience, thereby:
\begin{equation}
\begin{split}
     H(\hat{d}|\hat{z}_d) &\approx -\frac{1}{N}\sum_{i=1}^{N}p(\hat{h}(\hat{z}_d,0)|\hat{z}_d) \log p(\hat{h}(\hat{z}_d,0)|\hat{z}_d) \\
    &= \hat{H}(\hat{d}|\hat{z}_d)   
\end{split}
\end{equation}

\begin{theorem}
\label{coreTHRM}
\textbf{(The Convergency Theorem)} The backward gradient mutual information suppress loss is not contradict to the gradient of the backbone predictor, i.e. $\langle \frac{\partial\sum CE(\hat{h}_d(z_{d},z_{s}),d_i))}{\partial w_j},\frac{\partial\sum CE(\hat{h}_d(z_{d},z_{s}),d_i))}{\partial w_j} - \frac{\partial\sum \hat{d}_{d_i}log\hat{d}_{d_i}}{\partial w_j} \rangle \geq 0$, where $w_j$ is any parameter of the backbone predictor in C$^3$DG. 
Therefore, C$^3$DG leads to a convergence for the disentanglement.
\end{theorem}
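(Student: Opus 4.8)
The plan is to exploit that, under the approximation already fixed in Eq.~(\ref{Hhat}) (sampling $z_m=0$), both loss terms entering the inner product are functions of a single scalar per sample, namely the softmax mass $p_i:=\hat d_{d_i}=p(\hat h(\hat z_{d_i},0)\mid\hat z_{d_i})\in(0,1]$ that the domain head assigns to the observed domain $d_i$. With a one-hot domain label the summand of $\sum CE(\hat h_d(z_d,z_s),d_i)$ is $\ell_1(p_i)=-\log p_i$, whereas the summand of the suppression term $\sum \hat d_{d_i}\log\hat d_{d_i}$ is $\ell_2(p_i)=p_i\log p_i$. The key structural fact I would establish is that, for a fixed sample, $\ell_1$ and $\ell_2$ depend on any backbone parameter $w_j$ \emph{only through} this common scalar $p_i$, so by the chain rule their $w_j$-derivatives are parallel:
\[
\frac{\partial \ell_1}{\partial w_j}=\ell_1'(p_i)\,\frac{\partial p_i}{\partial w_j},\qquad
\frac{\partial \ell_2}{\partial w_j}=\ell_2'(p_i)\,\frac{\partial p_i}{\partial w_j}.
\]

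Next I would use this collinearity to collapse the inner product to a scalar sign test. Substituting the chain-rule expressions, the per-sample contribution is
\[
\frac{\partial \ell_1}{\partial w_j}\Bigl(\frac{\partial \ell_1}{\partial w_j}-\frac{\partial \ell_2}{\partial w_j}\Bigr)
=\ell_1'(p_i)\bigl(\ell_1'(p_i)-\ell_2'(p_i)\bigr)\Bigl(\frac{\partial p_i}{\partial w_j}\Bigr)^2,
\]
and because $(\partial p_i/\partial w_j)^2\ge 0$ the sign is decided by the scalar $\ell_1'(p)\bigl(\ell_1'(p)-\ell_2'(p)\bigr)$. With $\ell_1'(p)=-1/p$ and $\ell_2'(p)=\log p+1$ this equals $\tfrac{1}{p^2}\bigl(1+p+p\log p\bigr)$, so the entire theorem reduces to the elementary inequality $g(p):=1+p+p\log p\ge 0$ on $(0,1]$. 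I would close with a one-line calculus check: $g$ is convex ($g''(p)=1/p>0$), its only stationary point $g'(p)=2+\log p=0$ is at $p=e^{-2}$, and $g(e^{-2})=1-e^{-2}>0$; hence $g\ge 1-e^{-2}>0$ and the per-sample, per-parameter inequality follows, giving the claimed non-contradiction and thus convergence of the disentanglement optimization.

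The step I expect to be the main obstacle is the passage from this clean per-sample statement to the fully summed gradients appearing literally in the theorem. Once $\sum_i$ sits inside each gradient, $\partial(\sum_i\ell_1)/\partial w_j$ is in general \emph{not} collinear with $\partial(\sum_i(\ell_1-\ell_2))/\partial w_j$, and cross terms proportional to $(\partial p_i/\partial w_j)(\partial p_{i'}/\partial w_j)$ of indefinite sign appear, so the tidy square factorization no longer applies verbatim. I would resolve this by arguing at the level of each sample's gradient contribution, i.e. establishing the inequality termwise before aggregation---consistent with the reading ``$w_j$ is any parameter''---and I would emphasize that the lower bound $1-e^{-2}$ is uniform in $p$, so no sample-dependent constants interfere. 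A secondary point requiring care is the sign bookkeeping that links the suppression term as it enters $L_{recon}$ in Eq.~(\ref{realLrecon}) to the summand $\ell_2(p)=p\log p$ used above, since $\hat H$, $L_{supp}$, and the $-L_{supp}$ in the reconstruction loss carry competing signs that must be reconciled before the scalar analysis is valid.
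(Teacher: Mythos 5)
Your proposal follows the same skeleton as the paper's proof---both arguments push the two gradients through the chain rule so that they become scalar multiples of a common per-sample derivative, factor out that derivative squared, and reduce the claim to the sign of a scalar expression in the predicted domain probability---but the closing step is genuinely different and, in your version, stronger. The paper instantiates a one-layer ReLU head in the binary case, arrives at $\bigl(\sum_i z_i\bigr)^2\sum_i \frac{d_{i_1}-\hat d_{i_1}}{\hat d_{i_1}(1-\hat d_{i_1})}\bigl(\frac{d_{i_1}-\hat d_{i_1}}{\hat d_{i_1}(1-\hat d_{i_1})}+\log\frac{1-\hat d_{d_1}}{\hat d_{d_1}}\bigr)$, and then closes with a \emph{conditional, heuristic} estimate: it assumes $0<\hat d_{d_1}<\hat d_{i_1}\approx 1$ (``the CE prediction is already good'') and invokes a ``$\gg$'' comparison rather than a provable inequality. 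Your reduction to $g(p)=1+p+p\log p\ge 1-e^{-2}>0$ is unconditional and elementary, so it buys a cleaner and assumption-free per-sample statement; the paper's version buys closer fidelity to the literal loss terms (it keeps the two-term binary cross-entropy and allows the suppression term to be evaluated at a \emph{different} probability $\hat d_{d_1}$ than the CE term $\hat d_{i_1}$). That last point is the one substantive caveat for your argument: the suppression summand in Eq.~(\ref{Hhat}) is $q_i\log q_i$ with $q_i=p(\hat h(\hat z_{d_i},0)\mid\hat z_{d_i})$ the probability of the \emph{predicted} domain, which coincides with your $p_i$ only when the prediction is correct; in the binary case the misclassified branch still works (there $\ell_2(p)=(1-p)\log(1-p)$ and the relevant scalar becomes $\frac{1}{p^2}\bigl(1-p-p\log(1-p)\bigr)\ge\frac{1-p}{p^2}>0$), but you should state this case separately. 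Finally, your worry about cross terms after summation is well placed---the paper silently avoids it by treating $\partial\hat d/\partial w$ as sample-independent---so retreating to the per-sample inequality, as you propose, is the honest reading of the claim in both proofs.
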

\begin{proof}
\label{proof}
To simplify the problem, we consider the two-class classification problem (i.e. $d=2$). Also, take a one-layer classifier $\hat{h}_d$ for example, and ignore the softmax process. Let $\hat{d} = ReLU(WZ+B) \in \mathbb{R}^{d \times 1}$, where $W \in \mathbb{R}^{d \times k}, X \in \mathbb{R}^{k \times 1}$.

On the one hand:
\begin{equation}
    \begin{split}
        &\frac{\partial\sum CE(\hat{h}_d(z_{d},z_{s}),d_i))}{\partial w_j} \\
        =& \frac{\partial\sum CE(\hat{h}_d(z_{d},z_{s}),d_i))}{\partial \hat{d}} \cdot \frac{\partial \hat{d}}{\partial w_{m,1}} \\
        =& \frac{\partial \sum_{i = 1}^{N}   d_{i_j} \log \hat{d_{i_j}}+(1-d_{i_j})\log (1-\hat{d_{i_j}})}{\partial \hat{d}} \cdot \frac{\partial \hat{d}}{\partial w_{m,1}}       
    \end{split}
\end{equation}

On the other hand:
\begin{equation}
    \begin{split}
        &\frac{\partial\sum \hat{d}_{d_i}log\hat{d}_{d_i}}{\partial w_j} \\
        =& \frac{\partial \sum_{i = 1}^{N}   \hat{d}_{d_{i_j}} \log \hat{d}_{d_{i_j}}-(1-\hat{d}_{d_{i_j}}) \log (1-\hat{d}_{d_{i_j}})}{\partial \hat{d}} \cdot \frac{\partial \hat{d}}{\partial w_{m,1}}   
    \end{split}
\end{equation}

Therefore,
\begin{equation}
    \begin{split}
    &\frac{\partial\sum CE(\hat{h}_d(z_{d},z_{s}),d_i))}{\partial w_{m,1}} - \frac{\partial\sum \hat{d}_{d_i}log\hat{d}_{d_i}}{\partial w_{m,1}}\\
    =&\frac{\partial \sum_{i = 1}^{N}   (d_{i_ 1} \log \hat{d_{i_ 1}} - \hat{d}_{d_{d_ 1}} \log \hat{d}_{d_{d_ 1}})}{\partial \hat{d}} \cdot \frac{\partial \hat{d}}{\partial w_{m,1}}\\
    +&\frac{\partial \sum_{i = 1}^{N}   ((1-d_{i_ 1}) \log (1-\hat{d}_{i_ 1}) -(1 - \hat{d}_{d_{i_ 1}}) \log (1-\hat{d}_{d_{i_ 1}}))}{\partial \hat{d}}\\
    &\cdot \frac{\partial \hat{d}}{\partial w_{m,1}}
    \end{split}
\end{equation}

For $d = ReLU(WX+B)$, and suppose that  $\hat{d}_{i_ 1}, \hat{d}_{d_{i_ 1}} \in (0,1)$. Then we have,
\begin{equation}
    \begin{split}
        \frac{\partial \hat{d}}{\partial w_{m,1}}=\frac{\partial (WZ+B)}{\partial w_{m,1}}=\sum_{i=1}^{d}z_i
    \end{split}
\end{equation}

Therefore, $\langle \frac{\partial\sum CE(\hat{h}_d(z_{d},z_{s}),d_i))}{\partial w_j},\frac{\partial\sum CE(\hat{h}_d(z_{d},z_{s}),d_i))}{\partial w_j} - \frac{\partial\sum \hat{d}_{d_i}log\hat{d}_{d_i}}{\partial w_j} \rangle$ can be simpilified as follow.
    \begin{flalign*}
        =& (\sum_{i=1}^{n_z}z_i)^2\cdot \sum_{i=1}^{N} ((\frac{d_{i_1}}{\hat{d}_{i_1}}-\frac{1-d_{i_1}}{1-\hat{d}_{i_1}} )\cdot (\frac{d_{i_1}}{\hat{d}_{i_1}}-\frac{1-d_{i_1}}{1-\hat{d}_{i_1}}\\
        &+\log (\frac{1-\hat{d}_{d_1}}{\hat{d}_{d_1}})))\\
        =&(\sum_{i=1}^{n_z}z_i)^2\cdot\sum_{i=1}^{N} \frac{d_{i_1}-\hat{d}_{i_ 1}}{\hat{d}_{i_1}(1-\hat{d}_{i_1})} \cdot (\frac{d_{i_1}-\hat{d}_{i_ 1}}{\hat{d}_{i_1}(1-\hat{d}_{i_1})}+\log (\frac{1-\hat{d}_{d_1}}{\hat{d}_{d_1}}))\\
    \end{flalign*}

For $\hat{d}_{i_1}$ is a better prediction compared to $\hat{d}_{d_1}$, i.e. $0<\hat{d}_{d_1}<\hat{d}_{i_1}\approx1$, thereby $\frac{d_{i_1}-\hat{d}_{i_ 1}}{\hat{d}_{i_1}(1-\hat{d}_{i_1})} >> -\log (\frac{1-\hat{d}_{d_1}}{\hat{d}_{d_1}})$ holds, which means the inner product always bigger than 0.

\rightline{$\square$}
\end{proof}
\

\subsection{Theoretical Analysis for Upper Bound}
In this subsection, building upon the work of predecessors that focuses on sharpness-awareness minimization\cite{sam} and their several widely accepted assumptions, we will gradually derive the upper bound theorem for the proposed method.

In the subsequent discussion, we will first introduce a definition characterizing decoupling. Based on this definition, we also introduce two sets of assumptions that are commonly accepted within the research for identity disentanglement.

The following definitions in \cite{pmlr-v162-kong22a} and \cite{NEURIPS2023_6cb72460} characterize what is the ideal decoupling. 
\begin{definition}
    \label{PointIden}
    \textbf{(Partial Identifiability)} Let $z_d,z_m$ be the natural causal feature that jointly constructs the real data $(x,y)$, and $\hat{z}_d,\hat{z}_m$ be the approximate feature. Let $hom$ be the map from $z_d,z_m$ to $\hat{z}_d,\hat{z}_m$, i.e. $hom(z_d,z_m)=(\hat{z}_d,\hat{z}_m)$. $z_m$ is \textit{partial identifiable} if and only if $h_m$ is invertible, where $hom_m$ is the projection of $hom$ on the subspace of $\mathcal{Z}_m$
\end{definition}

We also adopt the assumptions and utilize the partial identifiability theories in \cite{NEURIPS2023_6cb72460}.
\begin{assumption}
\label{PIAssum}
\textbf{(Normal Generating Assumption)}
\begin{itemize}
    \item (Smooth and Positive Density): $p_{z|y}$ is smooth and $p_{z|y}>0$ over $\mathcal{Z}$ and $\mathcal{Y}$.
    \item (Conditional independence): Conditioned on u, each $z_i$ is independent of any other $z_j$ for $i,j \in \{1,2,...,n\}$. 
    \item (Class efficiency): Let vector $\mathbf{w}(\mathbf{z},\mathbf{y}):=(\frac{\partial logp_{z_i|y}(z_i|y)}{\partial z_i},...,\frac{\partial logp_{z_{n_s}|y}(z_{n_s}|y)}{\partial z_{n_s}})$. These $n_s$ vectors $\mathbf{w}(\mathbf{z},y_i)-\mathbf{w}(\mathbf{z},y_0)$ are linear independent.
\end{itemize}
\end{assumption}

\begin{lemma}
\label{lemmaPI}
\textbf{(Partial Identifiability and dimension sufficiency of $z_m$)} If the conditions in assumption \ref{PIAssum} holds, the mixed feature $z_m$ in fig. \ref{generatefig} is partial identifiable.
\end{lemma}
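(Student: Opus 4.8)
The plan is to establish the invertibility of $hom_m$ (the projection onto $\mathcal{Z}_m$ of the true-to-estimated latent map) demanded by Definition \ref{PointIden}, by showing that the Jacobian of $hom$ is block-structured so that the estimated mixed variable $\hat z_m$ is a diffeomorphic function of the true mixed variable $z_m$ alone. I would start from the premise that the estimated and ground-truth generative models in Fig.~\ref{generatefig} induce the same observational distribution $p(x,y)$; since the shared decoder $g$ together with the label map is injective on the relevant support, this equality transfers to the conditional latent densities up to the reparametrising diffeomorphism $hom$, with $\hat z = hom(z)$.

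First I would apply the change-of-variables formula to write $\log p_{\hat z|y}(\hat z\mid y)=\log p_{z|y}\big(hom^{-1}(\hat z)\mid y\big)+\log\big|\det J_{hom^{-1}}(\hat z)\big|$, and then invoke the Conditional-independence clause of Assumption \ref{PIAssum} to factorise both sides into sums of per-coordinate log-densities. The Smooth-and-Positive-Density clause guarantees that these logarithms and their derivatives exist everywhere on $\mathcal{Z}$, so I may differentiate the matched identity freely.

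Next I would differentiate with respect to the individual latent coordinates and exploit the dependence on the label $y$. Differencing the resulting score equations across a reference class $y_0$ and the remaining classes $y_i$ cancels every term that does not depend on $y$ --- in particular the purely domain-driven contribution of $z_d$ and the Jacobian-determinant term --- leaving a linear system whose coefficients are exactly the vectors $\mathbf{w}(\mathbf{z},y_i)-\mathbf{w}(\mathbf{z},y_0)$ of the Class-efficiency clause.

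The crux is then to invoke Class efficiency: because these $n_s$ difference vectors are linearly independent, the only way the system can hold identically over an open set of $z$ is for the off-block entries of $J_{hom}$ that would couple $z_d$ into $\hat z_m$ to vanish. This forces $\hat z_m$ to depend on $z_m$ alone, and combined with the global invertibility of $hom$ this makes the induced map a local diffeomorphism, i.e. $hom_m$ is invertible, which is precisely partial identifiability of $z_m$ (and, as a byproduct, pins down its effective dimension). I expect the main obstacle to be the cross-derivative bookkeeping that converts the linear-independence hypothesis into the vanishing of the correct Jacobian block: one must track carefully which second-order mixed partials survive the differencing across classes and verify that the surviving coefficient matrix is exactly the one assumed full rank rather than a degenerate sub-matrix, together with the mild regularity needed to promote the pointwise rank condition into a statement valid on a full-measure set.
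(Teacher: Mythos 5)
The paper never proves Lemma~\ref{lemmaPI}: it is imported by citation from the identifiability literature (\cite{pmlr-v162-kong22a}, \cite{NEURIPS2023_6cb72460}), so there is no in-paper argument to measure yours against. Your sketch is a faithful reconstruction of the standard proof in those references --- match the observational distributions, relate true and estimated latents by a diffeomorphism $hom$, change variables, factorise the conditional log-densities via the conditional-independence clause, difference the resulting score equations across classes to cancel every $y$-independent term (the $z_d$ contribution and the log-Jacobian-determinant), and then use the linear independence of the $n_s$ vectors $\mathbf{w}(\mathbf{z},y_i)-\mathbf{w}(\mathbf{z},y_0)$ to force the Jacobian block coupling $z_d$ into $\hat z_m$ to vanish, whence $hom_m$ is a (local) diffeomorphism of $\mathcal{Z}_m$ as Definition~\ref{PointIden} requires. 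That is the right route and the right crux, and what your sketch buys over the paper's bare citation is an actual argument specialised to the causal graph of Fig.~\ref{generatefig}.

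Two caveats, both inherited from the paper rather than introduced by you. First, your opening premise --- that the learned and true generative models induce the same $p(x,y)$ and that the mixing/decoding map is injective on the relevant support --- appears nowhere in Assumption~\ref{PIAssum}; it is the ``achieved global optimum plus invertible generator'' hypothesis that the cited works state explicitly and this paper leaves implicit, and without it the diffeomorphism $hom$ you differentiate does not exist. Second, the class-efficiency clause can only supply $n_s$ linearly independent difference vectors if there are at least $n_s+1$ distinct classes; this cardinality condition should be stated, since the differencing argument degenerates otherwise. Neither point invalidates your reasoning relative to the literature, but since yours is effectively the only proof on record for this lemma, it should carry these hypotheses explicitly.
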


In ARM \cite{new_arm}, the authors propose various methods to simulate shifts in parameters, leading to the following assumptions:
\begin{assumption}
\label{SAMAssum}
\textbf{(Equivalent Shift Assumption)}
\begin{itemize}
 \item (Equivalent Parameter Adjustment) Combining extracted context features with the data $x$ is equivalent to directly modifying the model's parameters, i.e. $f_{\theta}((x,x_{context})) = f_{\theta^{'}}(x)$ where $\theta,\theta^{'}$ are the original and modified parameters of the network. 
 \item (Minuscule Parameter Adjustment) For any batch $B$, the Euclidean distance $|\theta-\theta^{'}|$ is controlled by a small constant $\gamma$
\end{itemize}    
\end{assumption}

Sharpness-awareness Maximization \cite{sam} has already demonstrated its enhancement of model generalization capabilities. Recent works \cite{pmlr-v162-kim22f} have provided a wealth of theoretical basis for DG.

\begin{lemma}
\label{lemmaSAM}
\textbf{(Better Generalbility of Adjusted Parameters)}
Fix the parameter $w$, and define the training set loss $L(w):=\frac{1}{n}\sum_{i=1}^{n} L(w,x_i,y_i)$ and real loss $L_D(w):= \mathbb{E}_{x,y} L(w,x,y)$. Now onsider the maximization $L(w+\epsilon_0)$ of problem $\underset{\| \epsilon /|w|\| \leq \rho }{\max} L(w+\epsilon)$. Let $w \in \mathbb{R}^k$, then the following inequality holds:

\begin{equation}
    \mathbb{E}_{\| \epsilon /|w|\| \leq \rho}L_{D}(w+\epsilon) \leq L(w+\epsilon_0)+\sqrt{\frac{O(k+log(\frac{n}{1-\delta}))}{n-1}}
\end{equation}
where $\delta$ is the scale ratio between training set $(X,Y)$ of size $n$ and the whole potential data space $(\mathcal{X},\mathcal{Y})$ of $N$, i.e. $\delta=\frac{n}{N}$.
\end{lemma}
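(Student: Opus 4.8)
The plan is to recognize this inequality as a PAC-Bayesian sharpness bound in the spirit of \cite{sam}, and to derive it by instantiating McAllester's generalization bound with a perturbation posterior centered at the trained weights $w$. First I would invoke the PAC-Bayes theorem: for any data-independent prior $P$ over $\mathbb{R}^k$ and any (possibly data-dependent) posterior $Q$, with probability at least $1-\delta$ over the draw of the $n$-sample training set,
\[
\mathbb{E}_{\epsilon \sim Q}\, L_D(w+\epsilon) \;\le\; \mathbb{E}_{\epsilon \sim Q}\, L(w+\epsilon) \;+\; \sqrt{\frac{\mathrm{KL}(Q\,\|\,P) + \log\frac{n}{\delta}}{2(n-1)}}.
\]
I would choose $Q$ to be supported on the normalized ellipsoid $\{\,\|\epsilon/|w|\| \le \rho\,\}$ — either the uniform law on that set (so its mean matches the lemma's left-hand side exactly) or an isotropic Gaussian scaled coordinate-wise by $|w|$ — and take $P = \mathcal{N}(0,\sigma_P^2 I)$.

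The second step is to control the two terms on the right. For the empirical term, if $Q$ is supported inside the ball then $\mathbb{E}_{\epsilon \sim Q} L(w+\epsilon) \le \max_{\|\epsilon/|w|\|\le\rho} L(w+\epsilon) = L(w+\epsilon_0)$ immediately; if $Q$ is Gaussian I would instead split into the event $\{\epsilon \in \text{ball}\}$ and its complement, bounding the in-ball contribution by $L(w+\epsilon_0)$ and the escaping mass by a chi-squared concentration inequality for $\|\epsilon\|^2$. For the complexity term I would substitute the closed form of the divergence against the Gaussian prior, which exposes a term of the form $\tfrac12[\,\|w\|^2/\sigma_P^2 + k(\sigma^2/\sigma_P^2 - 1 - \log(\sigma^2/\sigma_P^2))\,]$ and hence the dimension factor $k$ (for the uniform choice the same $k$-dependence surfaces through the log-volume of the ellipsoid in the differential-entropy term).

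The third step handles the data-dependence of the optimal prior scale: the divergence is minimized near $\sigma_P^2 \approx \sigma^2 + \|w\|^2/k$, which depends on $\|w\|$ and is therefore illegal as an \emph{a priori} choice. I would instead fix a geometric grid of candidate $\sigma_P$ values, apply the bound to each with its $\delta$-budget apportioned by a union bound, and select the grid point closest to the data-dependent optimum. This covering argument converts the raw divergence into the logarithmic term $\log\!\frac{n}{1-\delta}$ and, together with tuning $\sigma$ so that the expected squared perturbation matches $\rho^2$, produces the claimed radicand $O\!\big(k + \log\frac{n}{1-\delta}\big)/(n-1)$; identifying $\delta = n/N$ as the train-to-universe scale ratio then gives the stated form.

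The main obstacle I anticipate is the second step — cleanly relating the posterior expectation of the empirical loss to the worst-case value $L(w+\epsilon_0)$ over the ball while keeping the escaping probability mass negligible. Under a bounded-support (uniform) posterior this collapses to a trivial ``expectation $\le$ max,'' but then the divergence against a Gaussian prior must be estimated through the log-volume of the normalized ellipsoid; under a Gaussian posterior the divergence is clean but the max/expectation gap needs the chi-squared tail estimate. Balancing these two choices, and verifying via the Equivalent Shift Assumption (Assumption~\ref{SAMAssum}) that the test-time--adjusted parameter $\theta'$ lies within the $\rho$-ball around $\theta$ so that the bound indeed governs the adapted model, is where the care is required.
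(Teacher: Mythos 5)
The paper does not actually prove this lemma: it is imported wholesale from the sharpness-aware-minimization literature (the sentence preceding it cites \cite{sam} and \cite{pmlr-v162-kim22f}, and no proof environment follows the statement). Your PAC-Bayesian reconstruction is essentially the proof that sits behind that citation --- McAllester's bound with a perturbation posterior centered at $w$, a Gaussian prior whose scale is chosen from a pre-specified geometric grid with a union bound to legalize the dependence on $\|w\|$, and a comparison of the posterior expectation of the empirical loss with the worst case over the $\rho$-ball --- so your approach is the right one and, at the level of rigor of the $O(\cdot)$ statement, sound. Three small cautions. First, in the Gaussian-posterior variant the ``escaping mass'' argument needs the loss to be bounded (as assumed in \cite{sam}); say so explicitly, since otherwise the tail term is not controlled. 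Second, in the uniform-on-ellipsoid variant the $\mathrm{KL}$ term is $-\log \mathrm{Vol} + $ cross-entropy, and the $\log\Gamma(k/2+1)$ in the volume produces a $k\log k$ unless the prior scale is tuned to $\rho\,|w|/\sqrt{k}$ on the grid; this is absorbable into the paper's $O(k+\cdot)$ but should be flagged rather than waved at. Third, the term $\log\frac{n}{1-\delta}$ with $\delta=n/N$ is an artifact of the paper's (garbled) restatement --- the standard bound reads $\log\frac{n}{\delta}$ with $\delta$ a confidence level, not a sampling ratio --- so you are right to simply ``go along'' with the stated form, but you cannot genuinely derive that reinterpretation from the PAC-Bayes machinery; it should be presented as a notational identification, not a consequence. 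Your closing remark about verifying via Assumption~\ref{SAMAssum} that $\theta'$ lies in the $\rho$-ball belongs to the downstream Theorem rather than to this lemma, but it is the correct observation about where the lemma gets used.
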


\begin{theorem}
\label{coreTHRM}
\textbf{(The Upper Bound Theorem of Classification Risks)} Under assumption \ref{PIAssum} and \ref{SAMAssum}, The upper bound of the expected classification is as follows,
\begin{equation}
    \mathbb{E}_{\epsilon^{T}\hat{h}(\theta)\epsilon \leq \gamma}L_{D}(w+\epsilon) \leq L(w+\epsilon_0)+\sqrt{\frac{O(k+log(\frac{n}{1-\delta}))}{n-1}}
\end{equation}
\end{theorem}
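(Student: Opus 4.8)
The plan is to recognize that the claimed bound is, up to the shape of the perturbation region, exactly the sharpness-aware bound already supplied by Lemma~\ref{lemmaSAM}; the right-hand side is literally unchanged, so the entire task reduces to showing that the test-time adaptation confines the effective parameter perturbation to the Hessian ellipsoid $\{\epsilon : \epsilon^{T}\hat{h}(\theta)\epsilon \leq \gamma\}$, and that transporting Lemma~\ref{lemmaSAM} from a Euclidean ball to this ellipsoid leaves the bound intact. First I would invoke the Equivalent Shift Assumption (Assumption~\ref{SAMAssum}): combining the CRIB context feature with the input is equivalent to replacing $\theta$ by $\theta' = \theta + \epsilon$, and the Minuscule Parameter Adjustment clause bounds $|\epsilon| \le \gamma$. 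Lemma~\ref{lemmaPI} (partial identifiability under Assumption~\ref{PIAssum}) would then be used to guarantee that $z_m$ is invertibly recovered, so that the induced perturbation $\epsilon$ is well-defined and stable rather than an artifact of a degenerate representation.

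Next I would convert the Euclidean smallness of $\epsilon$ into the Hessian-weighted constraint. Expanding the empirical loss to second order about $w$ gives $L(w+\epsilon) = L(w) + \nabla L(w)^{T}\epsilon + \tfrac{1}{2}\epsilon^{T}\hat{h}(\theta)\epsilon + o(|\epsilon|^2)$; since the adaptation is performed around a trained, near-critical parameter, the linear term is negligible and the admissible perturbations are exactly those whose second-order loss increase is controlled, i.e. $\epsilon^{T}\hat{h}(\theta)\epsilon \leq \gamma$. The reparametrization $u = \hat{h}(\theta)^{1/2}\epsilon$ then maps this ellipsoid onto the Euclidean ball $\|u\| \le \sqrt{\gamma}$ underlying Lemma~\ref{lemmaSAM}, so that the worst-case term $L(w+\epsilon_0)$ and the PAC-Bayes expectation both carry over to the $u$-coordinates.

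Finally I would apply Lemma~\ref{lemmaSAM} in the reparametrized coordinates and argue that its right-hand side is invariant under this change of variables: the complexity term $\sqrt{O(k+\log(n/(1-\delta)))/(n-1)}$ depends only on the ambient dimension $k$, the sample size $n$, and the scale ratio $\delta$, none of which are altered by an invertible linear reparametrization, while $L(w+\epsilon_0)$ is merely the maximizer of the training loss over the same region in either coordinate system. The main obstacle I anticipate is exactly this transport step: I must ensure that the covering-number argument behind Lemma~\ref{lemmaSAM} yields the same $O(k)$ dimensional dependence for the ellipsoid as for the ball irrespective of the Hessian's eccentricity (a delicate point when $\hat{h}(\theta)$ is near-singular), and I must justify discarding the first-order Taylor term, which requires the adaptation to operate genuinely near a critical point rather than merely within a small Euclidean radius.
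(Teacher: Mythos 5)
Your proposal is correct in its overall strategy---reduce the statement to Lemma~\ref{lemmaSAM} by using Assumption~\ref{SAMAssum} to identify the test-time feature concatenation with a small parameter perturbation $\epsilon_0=\theta'-\theta$---and this matches the paper's skeleton. However, you take a genuinely different route for the key intermediate step of producing the constraint region $\{\epsilon:\epsilon^{T}\hat{h}(\theta)\epsilon\leq\gamma\}$. The paper does \emph{not} obtain $\hat{h}(\theta)$ as the Hessian of the loss via a second-order Taylor expansion around a near-critical point; instead it defines $d(\theta,\theta')=\mathbb{E}_x[\mathbf{KL}(p(y|x,\theta')\|p(y|x,\theta))]$, approximates this KL divergence for small $\epsilon$ by the quadratic form $\epsilon^{T}F(\theta)\epsilon$ with $F(\theta)$ the Fisher information matrix (then replaced by a diagonalized minibatch estimate), so the ellipsoid encodes a small change in the classifier's \emph{output distribution} rather than a small second-order increase in the training loss. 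The paper then argues that $\epsilon_0$ solves the constrained maximization via a heuristic about ``confusing confidence'' points where the two classifiers disagree, and invokes Lemma~\ref{lemmaSAM} directly. Your Hessian interpretation requires the extra hypothesis that $w$ is near-critical (to kill the first-order Taylor term), which the paper does not need; in exchange, your explicit whitening reparametrization $u=\hat{h}(\theta)^{1/2}\epsilon$ is a more honest attempt at the transport step than the paper's bare assertion, and the two obstacles you flag---the ball-versus-ellipsoid mismatch in the covering argument and the degeneracy when $\hat{h}(\theta)$ is near-singular (which is guaranteed for a diagonalized minibatch Fisher estimate with more parameters than batch elements)---are real gaps that the paper's own proof silently shares rather than resolves. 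Your use of Lemma~\ref{lemmaPI} to certify that $\epsilon$ is well-defined is an addition not present in the paper's argument, which never uses Assumption~\ref{PIAssum} in the proof despite citing it in the theorem statement.
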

\begin{proof}
\label{proof}
Let $p(y|x,\theta)$ and $p(y|x,\theta^{'})$ denote the ditributions of output from the original model and the C$^3$DG model. Denote $d(\theta,\theta^{'})=\mathbb{E}_{x}[\mathbf{KL}(p(y|x,\theta^{'})||p(y|x,\theta))]$, where $\mathbf{KL}$ is the KL-divergency.

For small $\epsilon$, we approximately have,

\begin{equation}
    d(\theta,\theta^{'}) \approx \epsilon^{T}F(\theta)\epsilon
\end{equation}
where $F(\theta)$ is the Fisher information matrix defined as follows,
\begin{equation}
    F(\theta)=\mathbb{E}_x \mathbb{E}_{\theta}[\nabla \log p(y|x,\theta)\nabla \log p(y|x,\theta)^{T}]
\end{equation}

We adopt the empirical diagonalised minibatch approximation form,
\begin{equation}
    \hat{F}(\theta)\approx\frac{1}{|B|}\sum diag(\nabla \log p(y_i|x_i,\theta))^2
\end{equation}

Now consider the maximization problem,
\begin{equation}
    \underset{\epsilon^{T}\hat{h}(\theta)\epsilon \leq \gamma}{\max} L(w+\epsilon)
\end{equation}
where $\gamma$ is the small hyperparameter in assumption \ref{SAMAssum}.

We now prove that $\epsilon_0 = \theta^{'}-\theta$ is one solution to the above problem. 
Now consider, 
\begin{equation}
    L(\theta^{'}) = \mathbb{E}_x [CE(y,p(y|x,\theta^{'}))] \approx \sum CE(y_i,p(y_i|x,\theta^{'}))
\end{equation}

Indeed, the main potential difference between the output distributions of $f_\theta,f_{\theta^{'}}$ is the “confusing confidence” points that diverge in the two classifiers. Based on the observation that the outputs change smoothly when $\theta$ perpetuates minusculely, in the manifold $\epsilon^{T}\hat{h}(\theta)\epsilon \leq \gamma$, $L(\theta)$ either stay steady or changes dramatically caused by aforementioned confusing points.

Therefore, by lemma \ref{lemmaSAM},
\begin{equation}
    \mathbb{E}_{\epsilon^{T}\hat{h}(\theta)\epsilon \leq \gamma}L_{D}(w+\epsilon) \leq L(w+\epsilon_0)+\sqrt{\frac{O(k+log(\frac{n}{1-\delta}))}{n-1}}
\end{equation}
where the definition of $\delta$ is the same as in lemma \ref{lemmaSAM}.

\rightline{$\square$}
\end{proof}
\

\section{Experiments}
\subsection{dataset}

\begin{figure}[htbp]
\centering
\includegraphics[width=\linewidth]{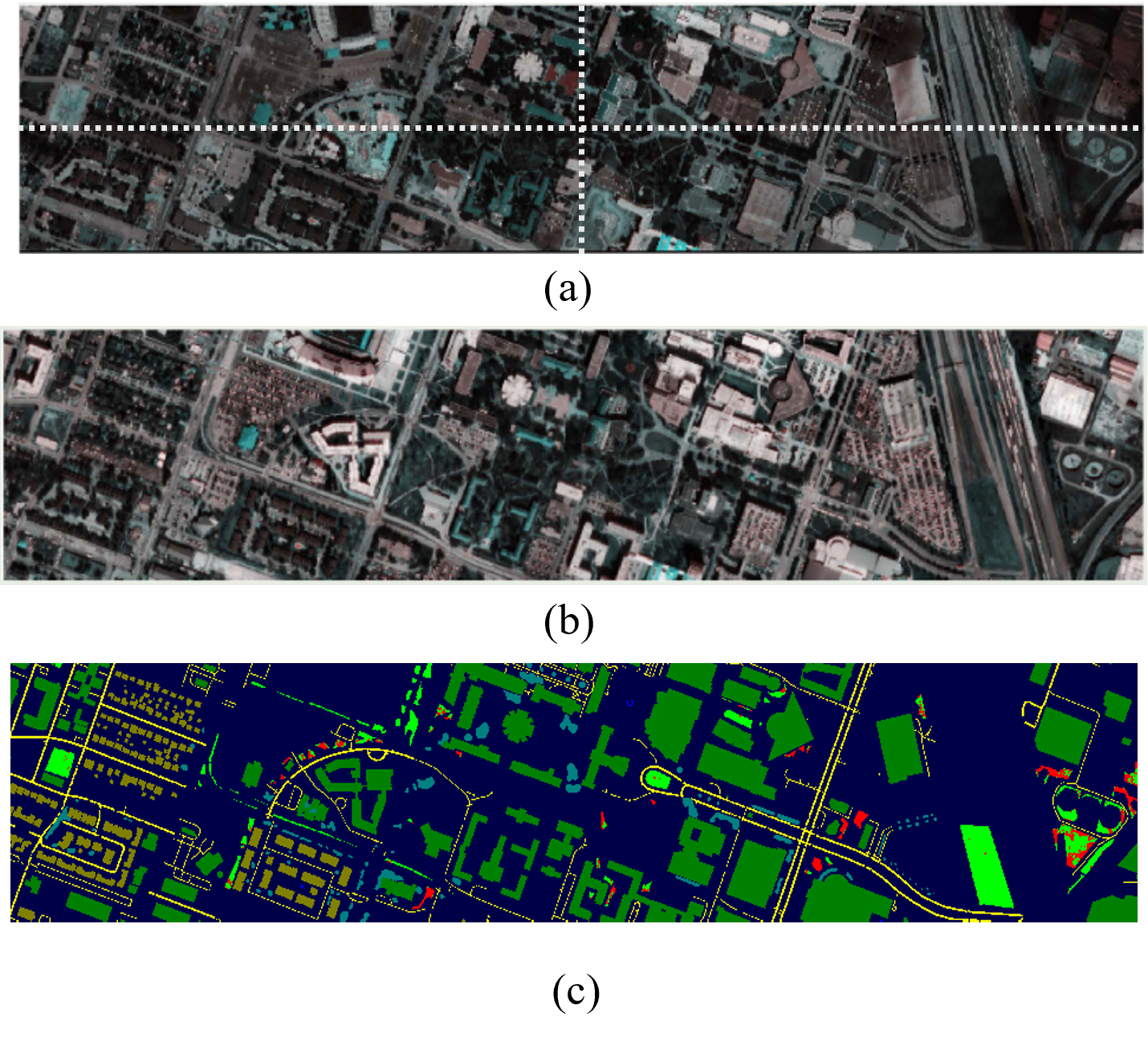}
\caption{(a) is the false-color image of the source domain Houston 2013, which we divided into four source domains along the white dashed lines. (b) is the false-color image of the target domain Houston 2018. (c) is the ground truth image of Houston 2018.}
\label{fig:gt_hu}
\end{figure}

\begin{figure}[htbp]
\centering
\includegraphics[width=\linewidth]{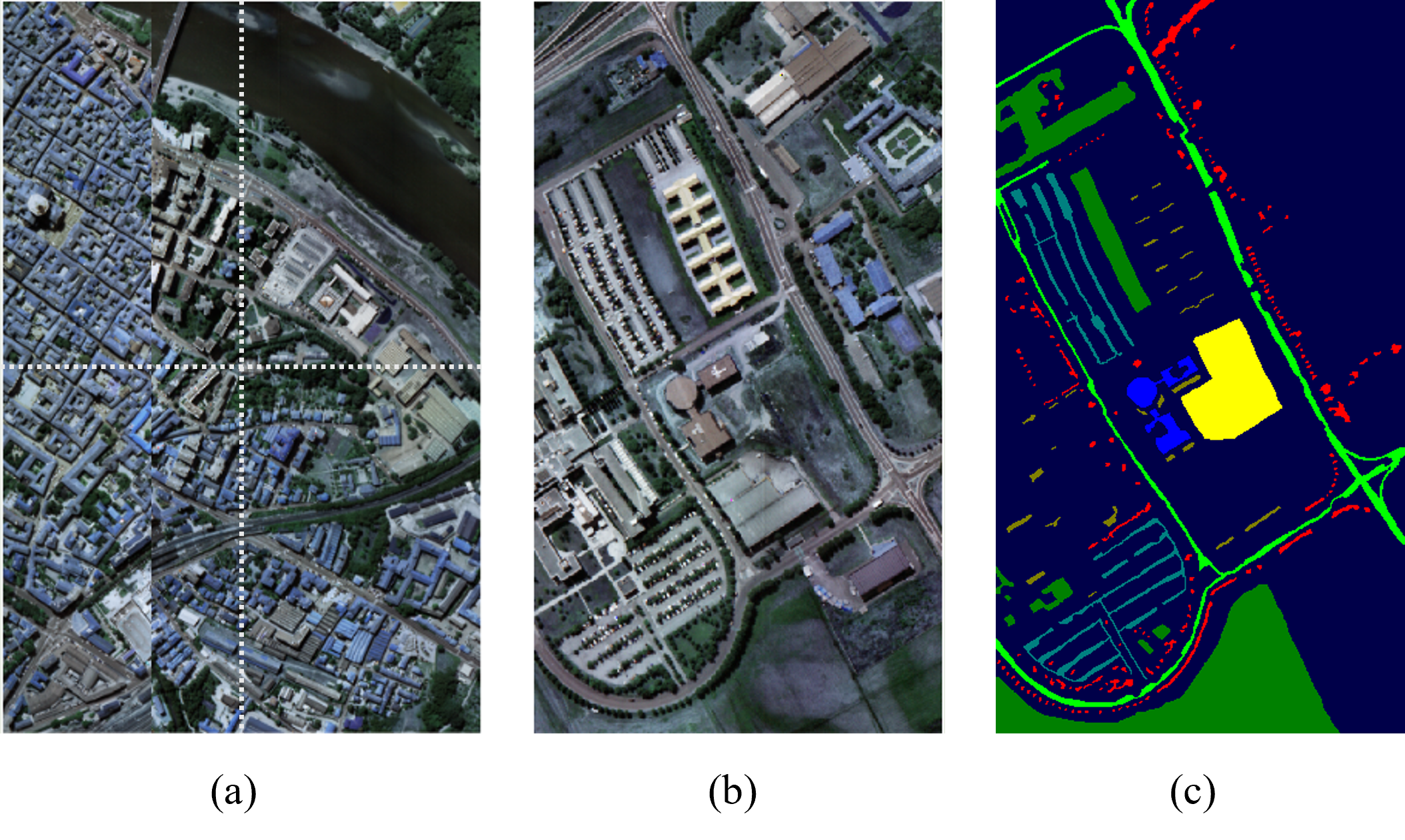}
\caption{(a) is the false-color image of the source domain Pavia City, which we divided into four source domains along the white dashed lines. (b) is the false-color image of the target domain Pavia University. (c) is the ground truth image of Pavia University.}
\label{fig:gt_cp}
\end{figure}

\begin{figure}[htbp]
\centering
\includegraphics[width=\linewidth]{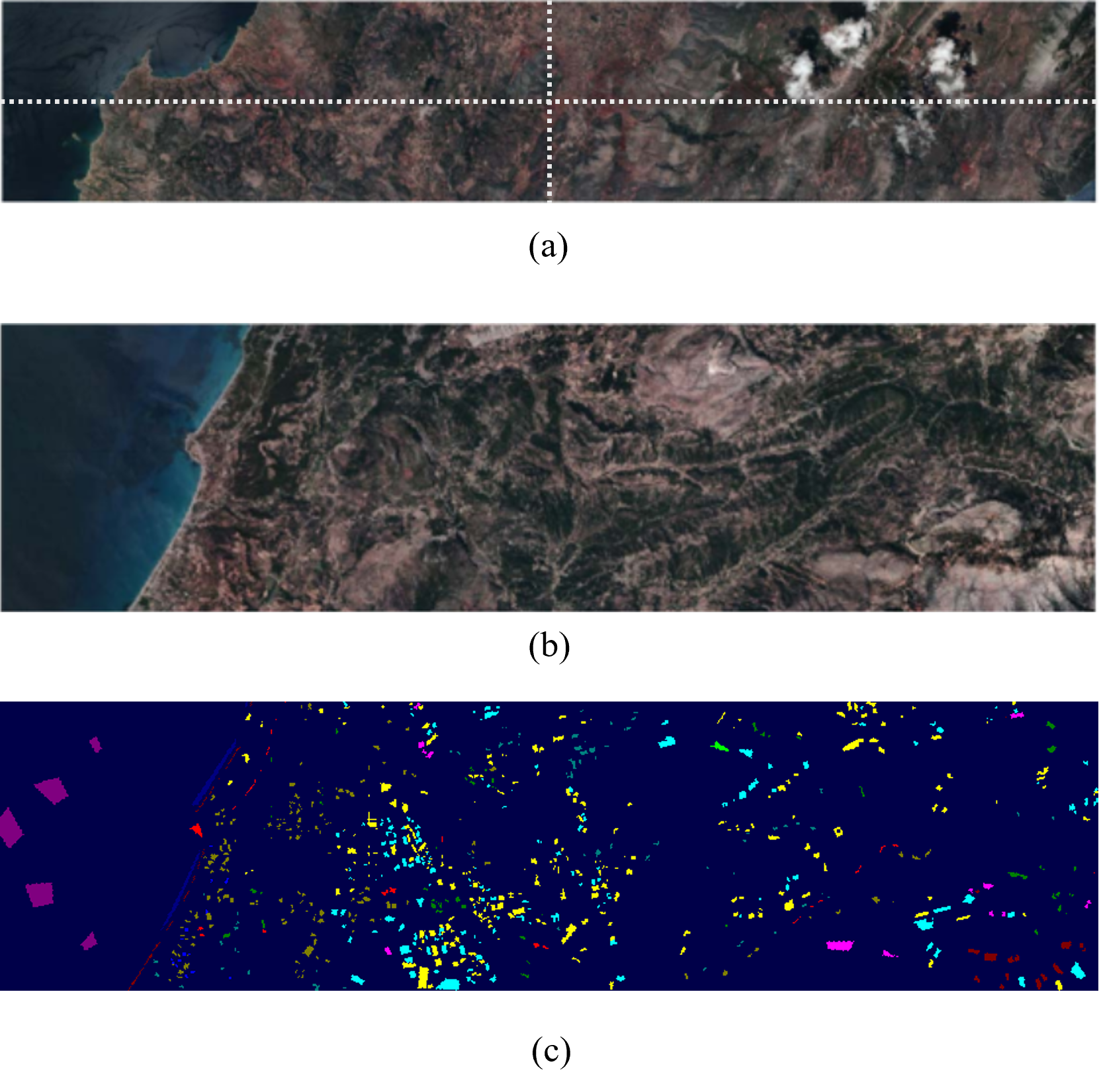}
\caption{(a) is the false-color image of the source domain Dioni, which we divided into four source domains along the white dashed lines. (b) is the false-color image of the target domain Loukia. (c) is the ground truth image of Loukia.}
\label{fig:gt_hr}
\end{figure}

\noindent\textbf{Cross-Scene Houston:} 
The dataset includes scenes from Houston 2013 and 2018, acquired through ITRES CASI-1500 located around the University of Houston campus. The Houston 2013 dataset is detailed with 349×1905 pixels, incorporating 144 spectral bands, spanning a wavelength range of 364-1046nm. Houston 2018 dataset retains the same wavelength range but a reduced count of 48 spectral bands. Both datasets contain seven consistent classes. For compatibility, 48 spectral bands within the wavelength range of 0.38-1.05µm are extracted from Houston 2013 to align with Houston 2018. This alignment focuses on a selected overlapping area measuring 209×955 pixels. We follow the experiment settings in \cite{single-DG} and visualize the datasets through false-color, ground truth maps, and category table, as depicted in Fig. \ref{fig:gt_hu} and Table \ref{sizehouston}.
 
\noindent\textbf{Cross-Scene Pavia:} 
The Pavia dataset comprises two scenes: Pavia University and Pavia Center, both captured by the Reflective Optics Spectrographic Image System (ROSIS). These datasets span a spectral range of 430nm-860nm. Pavia Center contains 1096×715 pixels and 102 spectral bands, while Pavia University includes 103 bands and 610×340 pixels. The last band of the Pavia University dataset was removed to ensure that both datasets share an identical number of spectral bands. These datasets include seven classes as listed in Table \ref{sizehouston}. Additionally, Fig. \ref{fig:gt_cp} illustrates the false-color images and ground-truth maps for both scenes.

\noindent\textbf{Hyrank Datasets:} 
The dataset encompasses hyperspectral imagery from two distinct regions, Dioni and Loukia. The dimensions of the Dioni region are 250×1376 pixels, whereas the Loukia region measures 249×945 pixels, with both areas captured at a spatial resolution of 30 meters. These samples were obtained using the Hyperion sensor, which records data across 176 spectral bands and identifies 14 consistent classes across the two regions. Data from all 176 spectral bands are utilized and only focus on 12 shared object categories. The specific count of samples across these categories is detailed in Table \ref{sizehouston}. We also provide false-color images and ground-truth maps as illustrated in Fig. \ref{fig:gt_hr}.

\subsection{Implementation Details}

In comparative experiments, we utilize the \textsc{domainbed}
implemented on the PyTorch framework. The training batch size was configured at $256$, while the test batch size was set at $64$. The learning rate was established at $0.005$. 

In our study, all encoder structures were based on 1D-CNNs, and all classifiers were implemented using fully connected neural network architectures. It is worth mentioning that, for computational convenience, our pseudo classifier was directly configured as a three-layer fully connected neural network.

For hyperparameters, both $\lambda_1$ and $\lambda_2$ were set at $0.1$, and the dimensions for $z_d, z_s,$ and $z_m$ were all set to 4. The update of the proposed method utilized the Adam optimizer, with weight decay included to prevent overfitting. Moreover, all experiments were run on a single RTX 3060 GPU. Each experiment was conducted eight times, with the results averaged to ensure reliability and accuracy.

\subsection{Comparative Experiments}

\begin{table*}[htbp]
\centering
\caption{COMPARISON OF DIFFERENT METHODS ON CROSS-SCENE HOUSTON DATASET.}
\begin{tabular}{
    S[detect-mode] 
    l 
    *{9}{S[table-format=2.2]} 
    }
\toprule[1.3pt]
\midrule[0.5pt]
{No.} & {Class} & {ERM} & {CAD \cite{dubois2021optimal}} & {CausIRL \cite{chevalley2022invariant}} & {MTL \cite{mtl}} & {SelfReg \cite{kim2021selfreg}} & {SDENet\_spe \cite{single-DG}} & {LiCa \cite{lica}} & {S$^2$ECNet\_spe \cite{s2ecnet}}&{C$^3$DG} \\
\midrule
\text{C1} & {Grass H.} & 37.83 & 93.50 & 85.29 & 65.07 & 40.07 & 94.47 & 86.94 & 70.89 & 90.37\\
\text{C2} & {Grass S.} & 26.40 & 32.87 & 0 & 0 & 19.94 & 29.29 & 0 & 37.16 & 26.34\\
\text{C3} & {Trees} & 32.31 & 28.42 & 16.88 & 18.63 & 24.25 & 70.24 & 18.09 & 72.42 & 33.88\\
\text{C4} & {Water} & 66.67 & 66.67 & 80.95 & 100 & 57.14 & 95.23 & 100 & 100 & 90.47\\
\text{C5} & {R. buildings} & 54.42 & 52.94 & 31.42 & 25.09 & 49.49 & 79.04 & 16.10 & 74.92 & 64.19\\
\text{C6} & {N. buildings} & 90.05 & 86.39 & 81.59 & 81.34 & 86.52 & 62.73 & 84.35 & 72.78 & 85.39\\
\text{C7} & {Road} & 8.89 & 6.99 & 38.22 & 58.35 & 12.72 & 55.57 & 24.47 & 31.43 & 19.55\\
\midrule
& {OA (\%)} & 66.48 & 65.70 & 60.56 & 61.79 & 63.41 & 61.65 & 59.20 & 64.71 & \textbf{68.23} \\
& {AA (\%)} & 45.22 & 52.53 & 47.76 & 49.79 & 41.44 & \textbf{69.52} & 47.14 & 65.66 & 58.36  \\
& {Kappa$\times$100}  & 39.06 & 38.82 & 31.97 & 35.86 & 32.16 & 43.52 & 25.61 & \textbf{43.92} & 42.60  \\
\midrule[0.5pt]
\bottomrule[1.3pt]
\end{tabular}
\label{tab:comparisonhou}
\end{table*}

\begin{table*}[htbp]
\centering
\caption{COMPARISON OF DIFFERENT METHODS ON CROSS-SCENE PAVIA DATASET.}
\begin{tabular}{
    S[detect-mode] 
    l 
    *{9}{S[table-format=2.2]} 
    }
\toprule[1.3pt]
\midrule[0.5pt]
{No.} & {Class} & {ERM} & {CAD} & {CausIRL} & {MTL} & {SelfReg} & {SDENet\_spe} & {LiCa} & {S$^2$ECNet\_spe}&{C$^3$DG} \\
\midrule
\text{C1} & {Tree} & 73.82 & 64.51 & 85.16 & 84.57 & 77.54 & 92.96 & 94.15 & 83.88 & 72.48 \\
\text{C2} & {Asphalt} & 71.47 & 62.84 & 81.12 & 79.05 & 66.14 & 78.83 & 71.29 & 72.98 & 74.40 \\
\text{C3} & {Brick} & 18.46 & 12.86 & 1.26 & 0.27 & 39.14 & 52.37 & 1.70 & 55.96 & 23.51 \\
\text{C4} & {Bitumen} & 18.35 & 94.97 & 8.75 & 11.88 & 58.56 & 2.28 & 55.22 & 4.41 & 8.91 \\
\text{C5} & {Shadow} & 99.36 & 99.14 & 99.57 & 99.71 & 99.46 & 99.25 & 99.89 & 95.42 & 99.57 \\
\text{C6} & {Meadow} & 58.24 & 53.51 & 42.20 & 53.74 & 53.56 & 37.23 & 48.65 & 41.95 & 59.81 \\
\text{C7} & {Bare soil} & 58.78 & 52.59 & 81.78 & 70.36 & 58.63 & 86.06 & 74.22 & 91.24 & 61.24 \\

\midrule
& {OA (\%)} & 56.32 & 53.07 & 51.34 & 57.74 & 58.13 & 56.83 & 56.35 & 57.58 & \textbf{59.29} \\
& {AA (\%)} & 56.92 & 57.12 & 62.92 & 57.20 & \textbf{64.72} & 63.85 & 63.59  & 63.69& 57.13\\
& {Kappa$\times$100} & 44.72 & 42.61 & 41.44 & 45.62 & 46.07 & 46.38 & 45.51 & 41.92 & \textbf{46.51}\\
\midrule[0.5pt]
\bottomrule[1.3pt]
\end{tabular}
\label{tab:comparisonpa}
\end{table*}

\begin{table*}[htbp]
\centering
\caption{COMPARISON OF DIFFERENT METHODS ON HYRANK DATASET.}
\begin{tabular}{
    S[detect-mode] 
    l 
    *{9}{S[table-format=2.2]} 
    }
\toprule[1.3pt]
\midrule[0.5pt]
{No.} & {Class} & {ERM} & {CAD} & {CausIRL} & {MTL} & {SelfReg} & {SDENet\_spe} & {LiCa} & {S$^2$ECNet\_spe}&{C$^3$DG} \\
\midrule
\text{C1} & {D. U. Fabric} & 9.80 & 0.49 & 0 & 20.09 & 0 & 0.98 & 4.90 & 2.94 & 0\\
\text{C2} & {M. E. Sites} & 25.92 & 100 & 5.56 & 37.03 & 66.67 & 27.78 & 79.63 & 53.70 & 61.11\\
\text{C3} & {N. I. A. Land} & 37.20 & 63.98 & 65.40 & 55.21 & 72.75 & 1.65 & 38.62 & 6.87 & 45.02\\
\text{C4} & {Fruit Trees} & 20.25 & 10.12 & 50.63 & 29.11 & 22.78 & 0 & 62.03 & 0 & 63.29\\
\text{C5} & {Olive Groves} & 1.92 & 0.73 & 4.04 & 15.05 & 2.57 & 0.36 & 5.79 & 0 & 0\\
\text{C6} & {C. Forest} & 49.88 & 37.17 & 33.09 & 41.01 & 51.80 & 26.86 & 38.61 & 10.07 & 52.51\\
\text{C7} & {D. S. Vegetation} & 82.40 & 89.78 & 79.80 & 66.49 & 81.36 & 68.91 & 70.87 & 76.19 & 77.74\\
\text{C8} & {S. S. Vegetation} & 48.50 & 35.94 & 43.34 & 46.79 & 55.13 & 66.92 & 34.17 & 77.90 & 63.50\\
\text{C9} & {S. V. Areas} & 84.63 & 22.41 & 39.80 & 56.67 & 33.75 & 84.88 & 86.90 & 76.82 & 79.34\\
\text{C10} & {Rocks and Sand} & 51.00 & 3.11 & 12.47 & 54.78 & 45.43 & 0.45 & 24.72 & 0 & 45.21\\
\text{C11} & {Water} & 100 & 100 & 100 & 99.71 & 100 & 100 & 100 & 99.86 & 100\\
\text{C12} & {Coastal Water} & 100 & 100 & 100 & 100 & 100 & 100 & 100 & 100 & 100\\
\midrule
& {OA (\%)} & 62.43 & 57.02 & 58.99 & 61.04 & 63.08 & 58.42 & 56.25 & 61.56 & \textbf{64.60} \\
& {AA (\%)} & 50.96 & 46.98 & 44.50 & 51.79 & 52.69 & 39.90 & 53.85 & 42.03&\textbf{57.31}\\
& {Kappa$\times$100} & 54.26 & 49.11 & 49.09 & 50.10 & 55.24 & 47.98 & 47.09 & 52.19 & \textbf{57.14} \\
\midrule[0.5pt]
\bottomrule[1.3pt]
\end{tabular}
\label{tab:comparisonhyrank}
\end{table*}

\begin{table*}[htbp]
\centering
\caption{SAMPLE SIZE OF DATASETS}
\label{sizehouston}
\resizebox{\textwidth}{!}{%
\begin{tabular}{@{}lccccccccc@{}}
\toprule[1.3pt]
\midrule[0.5pt]
No. & Class     & Pavia University & Pavia Center & Class     & Houston2013 & Houston2018 & Class     & Dioni & Loukia \\ \midrule
C1  & Trees     & 3064             & 7598         & Grass healthy            & 345         & 1353        & Dense Urban Fabric             & 1262  & 206    \\
C2  & Asphalt   & 6631             & 9248         & Grass stressed           & 365         & 488         & Mineral Extraction Sites       & 204   & 54     \\
C3  & Bricks    & 3682             & 2685         & Trees                    & 365         & 2766        & Non Irrigated Arable Land      & 614   & 426    \\
C4  & Bitumen   & 1330             & 7287         & Water                    & 285         & 22          & Fruit Trees                    & 150   & 79     \\
C5  & Shadow    & 947              & 2863         & Residential buildings    & 319         & 5347        & Olive Groves                   & 1768  & 1107   \\
C6  & Meadows   & 18649            & 3090         & Non-residential buildings & 408         & 32459       & Coniferous Forest              & 361   & 422    \\
C7  & Bare soil & 5029             & 6584         & Road                     & 443         & 6365        & Dense Sclerophyllous Vegetation& 5035  & 2996   \\ 
C8 &  &              &          &                      &          &           & Sparce Sclerophyllous Vegetation& 6374 & 2361   \\
C9 &  &              &          &                      &          &            & Sparsely Vegetated Areas       & 1754  & 399    \\
C10 &  &              &          &                      &          &           & Rocks and Sand                 & 492   & 453    \\
C11 &  &              &          &                      &          &           & Water                          & 1612  & 1393   \\
C12 &  &              &          &                      &          &           & Coastal Water                  & 398   & 421    \\
\midrule
    & Total     & 39332            & 39355        & Total                    & 2530        & 53200       & Total                          & 20024 & 10317  \\ \midrule[0.5pt]
\bottomrule[1.3pt]
\end{tabular}%
}
\end{table*}

Our experiments are basically implemented on \domainbed \cite{domainbed}, a popular and widely accepted testbed with a multitude of methodologies and benchmarks for domain generalization testing and evaluating.

\begin{figure*}[htbp]
\centering
\includegraphics[width=\linewidth]{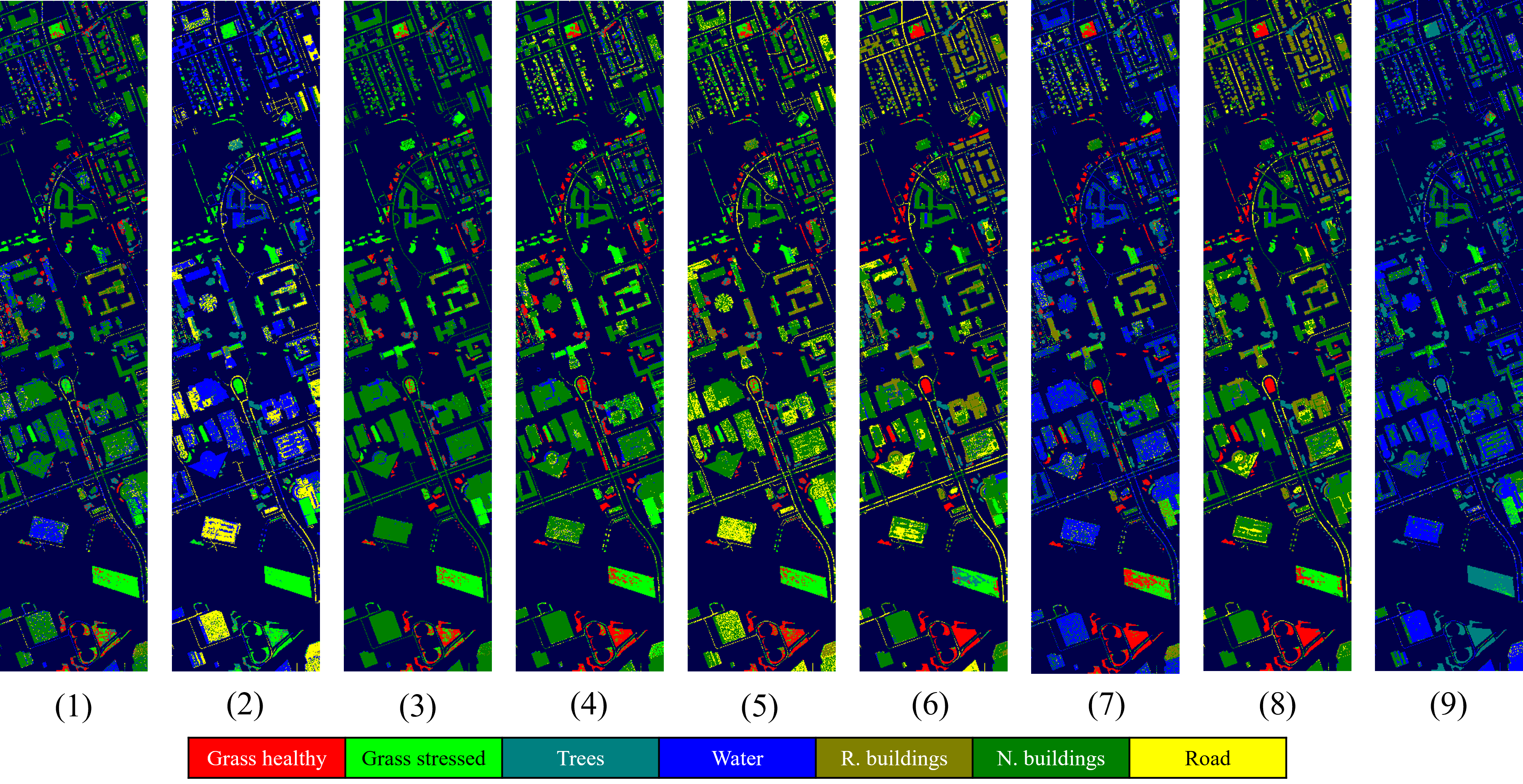}
\caption{Classification results on cross-scene Houston. (1)ERM. (2)CAD. (3)MTL. (4)CausIRL. (5)SelfReg. (6)SDEnet\_spe. (7)LiCa. (8)S${^2}$ECnet\_spe. (9)C$^3$DG.}
\label{fig:hu_exp}
\end{figure*}

\begin{figure*}[htbp]
\centering
\includegraphics[width=\linewidth]{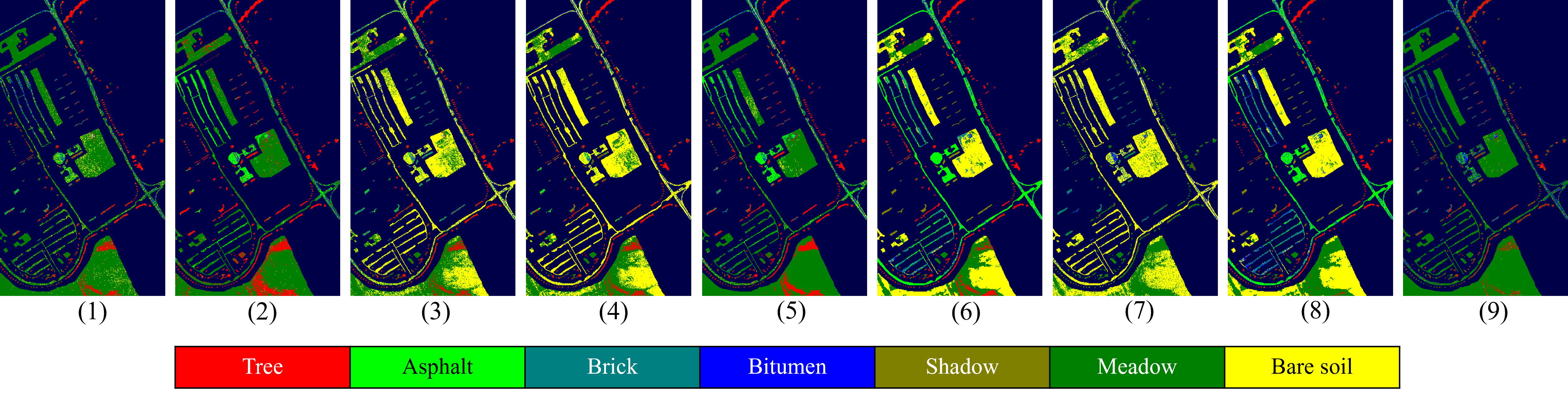}
\caption{Classification results on cross-scene Houston. (1)ERM. (2)CAD. (3)MTL. (4)CausIRL. (5)SelfReg. (6)SDEnet\_spe. (7)LiCa. (8)S${^2}$ECnet\_spe. (9)C$^3$DG.}
\label{fig:cp_exp}
\end{figure*}

\begin{figure*}[htbp]
\centering
\includegraphics[width=\linewidth]{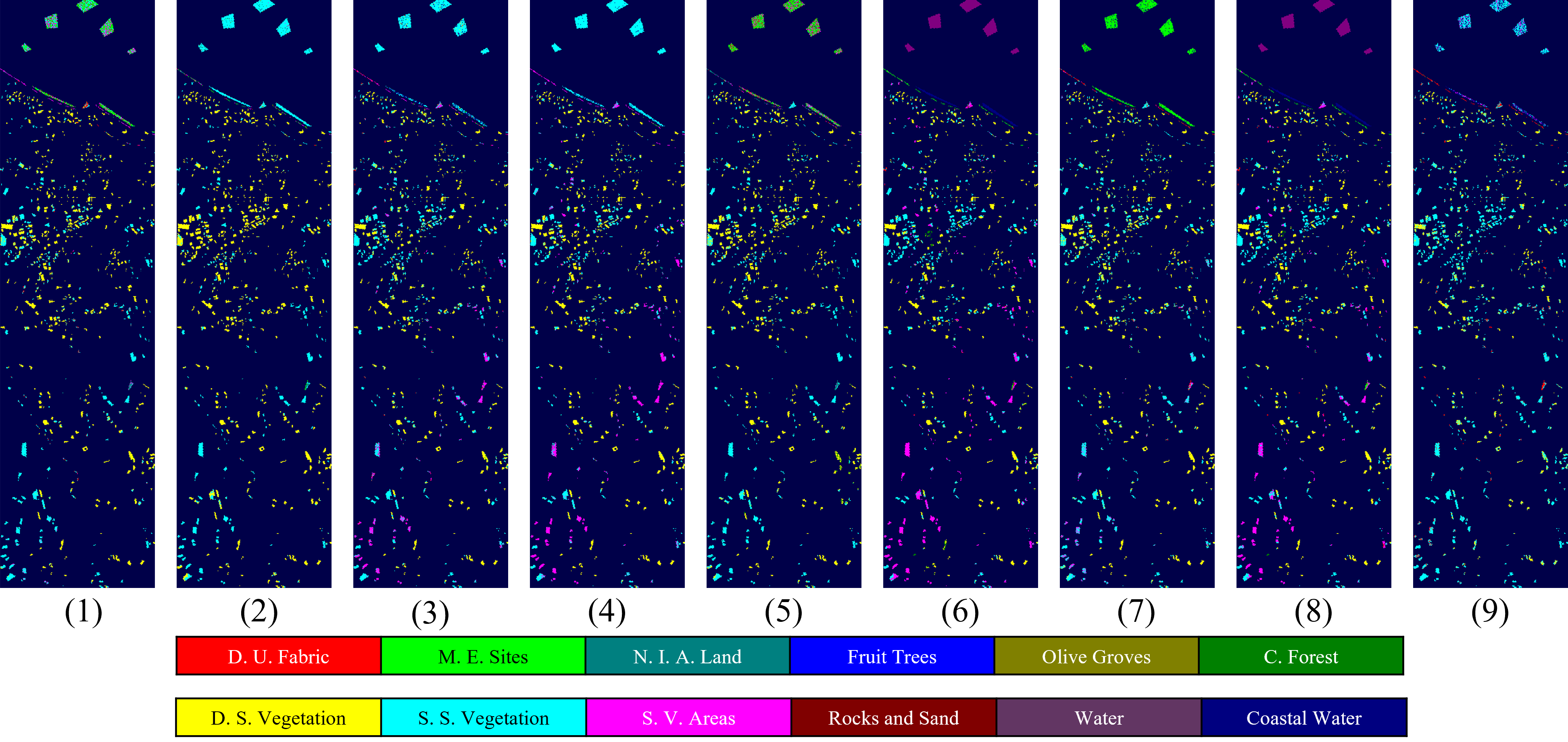}
\caption{Classification results on cross-scene Houston. (1)ERM. (2)CAD. (3)MTL. (4)CausIRL. (5)SelfReg. (6)SDEnet\_spe. (7)LiCa. (8)S${^2}$ECnet\_spe. (9)C$^3$DG.}
\label{fig:hr_exp}
\end{figure*}

In this subsection, we will initially compare the proposed method in this paper with the emperical risk minimization (ERM) method and various DG methods such as CAD \cite{dubois2021optimal}, CausIRL \cite{chevalley2022invariant}, MTL \cite{mtl}, SelfReg \cite{kim2021selfreg} and VREx \cite{krueger2021out}, to demonstrate its superiority for HSI classification tasks. Moreover, we adopt two single-source generation HSI classification methods SDEnet \cite{single-DG} and S$^2$ECnet \cite{s2ecnet} and a multi-source HSI classification method LiCa \cite{lica} for comparison, to illustrate the rationality of utilizing domain information in the proposed method. 
For a fair comparison, we removed the spatial extraction modules from methods SDEnet\cite{single-DG} and S$^2$ECnet \cite{s2ecnet}. Specifically, we set the patch size to one and eliminated the spatial randomization in SDEnet and the spatial encoder-decoder architecture in S$^2$ECnet. In the comparative experiment table, these modifications are denoted as SDEnet\_spe and S$^2$ECnet\_spe, respectively. The superior performance illustrates a promising approach to addressing the phenomenon of hyperspectral-monospectra.


To construct the multi-source domains, we adopt a straightforward and intuitive method to divide each dataset into four subpictures by splitting the image into four along the central point. This partitioning approach is not only easy to implement but also aligns well with the observations of HSI described in \cite{single-DG}. In HSI classification tasks, the evaluation metrics that are commonly employed include Overall Accuracy (OA), Average Accuracy (AA) across each class, and the Kappa Coefficient. These benchmarks are also utilized in our experiments to assess the performance of the proposed method for comparison. The analysis for the results on different datasets is as follows.

\noindent\textbf{Cross-Scene Houston:}
As illustrated in Table \ref{tab:comparisonhou}, we bold the best score among the comparison models. The direct application of domain generalization method SelfReg for natural scenes shows minimal advantage compared to ERM, with some methods even exhibiting significant disadvantages, such as CAD, CausIRL, and MTL. Domain generalization methods that include finer spectral extractors generally show a common advantage. Among these, our method demonstrates priority on multiple benchmarks.

\noindent\textbf{Cross-Scene Pavia:}
As illustrated in Table \ref{tab:comparisonpa}, we have highlighted the best score among the comparison models in bold. All the domain generalization methods demonstrate advantages in OA compared to the baseline ERM. Among these methods, our approach shows superior performance on multiple benchmarks.

\noindent\textbf{Hyrank Datasets:}
As illustrated in Table \ref{tab:comparisonhyrank}, we have highlighted the best score among the comparison models in bold. Our method shows advantages across the three benchmarks: OA, AA, and the Kappa Coefficient.

From Table \ref{tab:comparisonhou}, \ref{tab:comparisonpa}, \ref{tab:comparisonhyrank}, the domain generalization methods for 3-channel natural scene images generally perform worse than those for HSI. Additionally, when the spatial extraction capability is removed, these single-source domain methods typically exhibit inferior performance compared to our proposed method.

\subsection{Ablation experiment}

\begin{table}[htbp]
\centering
\captionsetup{font=small}
\caption{ABLATIONS ON CROSS-SCENE HOUSTON}
\label{tab:method_comparison}
\begin{tabular}{m{0.9cm}m{1.0cm}m{1.36cm}m{1.0cm}m{1.1cm}m{1.1cm}}
\toprule[1.3pt]
\midrule[0.5pt]
 method & \footnotesize Context Network & \footnotesize Conditional Distribution& \footnotesize VAE & \footnotesize CRIB& \footnotesize Accuracy  \\
\midrule
ERM & $\times$ & $\times$ & $\times$&$\times$ & 66.48 \\
1-light & $\checkmark$ & $\times$ & $\times$&$\times$ & 52.68\\
1-vae & $\checkmark$ & $\times$ & $\checkmark$&$\times$ & 54.52\\
$C$-vae & $\checkmark$ & $\checkmark$ & $\checkmark$&$\times$ & 67.23\\
C$^3$DG & $\checkmark$ & $\checkmark$ & $\checkmark$& $\checkmark$ & \textbf{68.23}\\
\midrule[0.5pt]
\bottomrule[1.3pt]
\label{ablation}
\end{tabular}
\end{table}

To validate the effectiveness of each block, we conduct experiments as illustrated in Table \ref{ablation}. ERM, as previously described, serves as a baseline and is simply a feature extractor and a classifier. Initially, we form an approach with a context encoder that consists of a shallow 1d-CNN feature extractor (i.e., 1-light in Table \ref{ablation}). Subsequently, we contrast it with two methods of which context networks that utilize VAE structure. The first employs a single context branch (i.e., 1-vae in Table \ref{ablation}), and the second uses multiple context branches, specifically C branches (i.e., C-vae in Table \ref{ablation}).


The experimental results in Table \ref{ablation} first confirm the effectiveness of C$^3$DG for addressing hyperspectral-monospectra, and also indicate that considering conditional distributions fits more closely with this strategy. Additionally, the comparison between 1-vae and 1-light reveals the success of implementing more complex architecture. Moreover, The comparison between C-vae and C$^3$DG demonstrates that our method retains classification efficacy while simplifying model parameters.

\section{Conclusion}
This work proposed a conditional domain generalization adapting-inference strategy tailored for HSI Classification. Targeting the phenomenon of hyperspectral-monospectra (i.e., same spectra but different material), our proposed method is capable of generating revised outputs depending on the distribution feature (i.e., the aforementioned domain features) of the input batch. This approach offers a solution that, on the one hand, reduces the impact of hyperspectral-monospectra in pixel-wise HSI classification tasks, and on the other hand, provides an adaptive strategy that can be applied to any HSI classification methods with a series of theorems for the rationality of addressing hyperspectral-monospectra. Extensive experiments conducted on three datasets confirm the efficacy of the proposed methods in the strategy of test time revised output. The performance delivered by C$^3$DG is on par with, or in some cases exceeds, that of current single-source domain generalization HSI classification methods.

\IEEEpubidadjcol
\bibliographystyle{ieeetr} 
\bibliography{IEEEcite} 

\end{document}